\documentclass[a4paper,UKenglish,cleveref, autoref, thm-restate]{oasics-v2021}
\pdfoutput=1 

\usepackage{booktabs}
\usepackage{amsthm}
\usepackage{algpseudocode}
\usepackage{algorithm}
\usepackage{graphicx}

\nolinenumbers

\title{Early Failure Prediction from Near-Anomaly Detection: A Proactive Approach} 

\author{Léa Billet}{Schaeffler, Toulouse, France \and LAAS-CNRS, University of Toulouse, INSA, Toulouse, France}{lea.billet@mail.schaeffler.com}{https://orcid.org/0009-0005-1360-9944}{}

\author{Louise Travé-Massuyès}{LAAS-CNRS, University of Toulouse, CNRS, Toulouse, France}{louise@laas.fr}{https://orcid.org/0000-0002-5322-8418}{}

\author{Elodie Chanthery}{LAAS-CNRS, University of Toulouse, INSA, Toulouse, France}{elodie.chanthery@laas.fr}{https://orcid.org/0000-0003-0015-5566}{}

\author{Alexandre Gaffet}{Schaeffler, Toulouse, France}{alexandre.gaffet@mail.schaeffler.com}{https://orcid.org/0000-0002-5347-8062}{}

\authorrunning{L. Billet, L. Travé-Massuyès, E. Chanthery, and A. Gaffet} 

\Copyright{L. Billet, L. Travé-Massuyès, E. Chanthery, and A. Gaffet} 

\ccsdesc[500]{Computing methodologies~Anomaly detection}

\keywords{Unsupervised anomaly detection, Christoffel function, Near-anomalies} 

\relatedversion{} 

\acknowledgements{The authors would like to thank Christophe Merle, Head of the Manufacturing Intelligent 5 team at
Schaeffler and ANITI Industrial Coordinator, for sharing his expertise in robotized production lines. Our work has benefited from the AI Interdisciplinary Institute ANITI. ANITI is funded by the France 2030 program under the Grant agreements n°ANR-19-P3IA-0004 and n°ANR-23-IACL-0002.}

\EventEditors{Ingo Pill, Marina Zanella, and Gregory Provan}
\EventNoEds{3}
\EventLongTitle{37th International Conference on Principles of Diagnosis and Resilient Systems (DX 2026)}
\EventShortTitle{DX 2026}
\EventAcronym{DX}
\EventYear{2026}
\EventDate{September 1--3, 2026}
\EventLocation{Cork, Ireland}
\EventLogo{}
\SeriesVolume{148}
\ArticleNo{5}

\begin{document}

\maketitle

\begin{abstract}
Anomaly detection methods often have uncertain behavior with respect to samples near the distribution boundary, limiting their ability to anticipate future anomalies. This work introduces the concept of near-anomalies that, while not yet anomalous, lie close to the boundary and are likely to transition into anomalies in the near future. To address this, we propose an unsupervised method, named Christoffel-based ANomaly Anticipation for eaRly dIscovery (CANARI), which leverages the strong theoretical foundations of the Christoffel function to detect near-anomalies. The method is validated on industrial in-circuit testing data from printed circuit boards, with synthetically generated near-anomaly samples due to the lack of real-world data labeling. Experimental results show that CANARI outperforms the compared baselines that generally use a dual-threshold mechanism (one for anomalies and one for near-anomalies). It therefore provides a proactive solution for anticipating anomalies before they occur, offering a promising approach for resilience, predictive maintenance, and quality control.
\end{abstract}

\section{Introduction}

Outliers, or anomalies, are typically defined as samples that are rare or out-of-distribution compared to inliers~\cite{ruff2021unifying}. However, in many real-world applications, potentially problematic samples may not yet qualify as true anomalies at the time of observation. Instead, they may lie near the boundary of the normal distribution, remaining technically within its limits. We refer to such samples as near-anomalies. While not classical outliers, they are of critical interest because their detection enables the early anticipation of future anomalies, thereby enhancing system resilience. In this work, we focus on identifying these near-anomaly samples—those that, though still part of the distribution, are precariously close to its edge. The method is applied to data from industrial manufacturing plant product test stations. 

Near-anomaly detection has been explored in image anomaly detection, where only a few pixels deviate from the normal distribution~\cite{strater2024generalad}. The generation of near-anomaly images has also been proposed for method evaluation~\cite{mirzaei2022fake}, notably using diffusion-based models. However, near-anomaly detection for high-dimensional tabular data remains relatively unexplored, despite its strong potential for predicting later problems.

Tabular anomaly detection methods typically score samples within a distribution and apply a threshold to classify them as inliers or outliers~\cite{samariya2023comprehensive}. They are usually trained in a semi-supervised manner, using only inlier data. In principle, these methods can be adapted to detect near-anomalies by introducing a second threshold closer to the anomaly boundary. This near-anomaly threshold may be set in a supervised manner, for example, using the dataset’s known contamination rate or defining it as a percentage of the primary anomaly threshold. However, in an unsupervised setting, such a hyperparameter cannot be automatically determined for all datasets.

In this paper, we propose a method, named Christoffel-based ANomaly Anticipation for eaRly dIscovery (CANARI), based on the Christoffel Function (CF), where a second threshold to detect near-anomaly samples is derived by leveraging the theoretical properties of the CF. While we use the CF-based method CLOE (CLOE)~\cite{billet2026cloe} for its scalability with respect to the feature space dimension, any other CF-based method, like DyCF \cite{ducharlet2024leveraging}, could equally apply even if it is more limited with respect to the dimension. 
CLOE relies on an autoencoder trained with a CF regularized loss to construct a well-structured, low-dimensional representation of the data, which can then be advantageously used for anomaly detection. We extend the CLOE framework to identify samples that, while not yet outliers, lie close enough to the decision boundary to be classified as near-anomalies—potential precursors to future anomalies.

Experiments are conducted on printed circuit boards (PCBs) using in-circuit testing (ICT) tabular data acquired on industrial manufacturing plant product test stations. In manufacturing plants, products undergo extensive testing during production to ensure quality, with results typically recorded as high-dimensional tabular data. Quality engineers define knowledge-based thresholds for each test, and products whose values remain near these thresholds may still pass inspection and proceed through the manufacturing process. Nevertheless, these borderline products can later exhibit issues in subsequent production steps or, more seriously, after delivery to the customer. Because thresholds are defined before all production features and distribution characteristics are fully known, such products are not necessarily labeled as anomalies at the time of testing. A possible approach would be to lower the test station thresholds in order to classify more products as anomalies. However, the appropriate reduction for each threshold is unknown and cannot be reliably determined in advance. Moreover, dependencies may exist between different testing station thresholds, which makes manual adjustment even more difficult. As a result, a more flexible method is needed to identify products that are not yet anomalous but are sufficiently close to abnormal conditions to warrant attention. Detecting them as near-anomalies would therefore provide an opportunity to raise early warnings of potential future quality problems.

The paper is organized as follows: background about the CF is presented in Section~\ref{sec:background}. Then the method CANARI is described in Section~\ref{sec:method} introducing the concept of near-anomaly and its detection. In Section~\ref{sec:experiments}, the set of conducted experiments and the associated results are presented with a description of the industrial use case. Finally, the conclusion and future work are given in Section~\ref{sec:conclusion}.

\section{Background - The Christoffel function for anomaly detection} \label{sec:background}
This section defines the CF and establishes the background for our method~\cite{lasserre2019empirical}, \cite{lasserre2022christoffel}.

\subsection{The Christoffel Function}
Let $\mu$ be a finite Borel measure \footnote{A \textit{Borel measure} is a way to assign a "size" (like length, area, or probability) to subsets of a topological space (here $\Omega$), where the subsets we can measure are all those that can be built from open sets using countable unions and complements. On the real line, the usual \textit{length} is a Borel measure which tells how long any interval is, and by extension, how long more complex sets (like unions of intervals) are.} supported on $\Omega \subset \mathbb{R}^d$, a compact set with a non-empty interior. Let $\mathbf{x}=(x_1, x_2, \cdots, x_d) \in \mathbb{R}^d$ and let $\alpha=(\alpha_i)_{i=1...d}\in\mathbb{N}^d$ be the vector of degrees associated to each variable for the monomial $\mathbf{x}^{\alpha}:=x_1^{\alpha_1}x_2^{\alpha_2}...x_d^{\alpha_d}$ of total degree $deg(\alpha)=\sum_{i=1}^d\alpha_i$. Let $\mathbb{N}_n^d=\{\mathbb{N}^d; deg(\alpha) \leq n\}$ and $\mathbf{v}_n(\mathbf{x}):= (\mathbf{x}^{\alpha})_{\alpha \in \mathbb{N}_n^d}$ be the vector of all monomials of degree less or equal to $n$ graded in the lexicographic order\footnote{Lexicographic order means that monomials are first sorted by degree; then, within each degree, they are ordered lexicographically by the variables, with $X_1=a$, $X_2=b$, etc.}. The size of the vector $\mathbf{v}_n(\mathbf{x})$ is equal to $s_d(n)=\binom{d+n}{n}$.

Let $\mathbf{M}_n(\mu)$ be the moment matrix of $\mu$. This matrix encodes information that quantifies the location, the shape, and the scale of $\Omega$. $\mathbf{M}_n(\mu)$ is a real symmetric, positive definite, and non-singular matrix for all $n$, and is given by 
\begin{equation}
     \mathbf{M}_n(\mu) = \int_{\mathbb{R}^d}\mathbf{v}_n(\mathbf{\mathbf{x}}) \mathbf{v}_n(\mathbf{x})^Td\mu(\mathbf{\mathbf{x}})
\end{equation}

Let us introduce the Christoffel-Darboux kernel $K_n^{\mu}$ associated with $\mu$. Given any orthonormal basis $(p_i)^{s_d(n)}_{i=1}$ of $\mathbb{R}_n[\mathbf{x}]$, orthonormal with respect to the inner product induced by $\mathbf{M}_n(\mu)$, $K_n^{\mu}$ is defined as:
\begin{equation} \label{eq:defCDKernel}
    (\mathbf{x},\mathbf{y}) \mapsto K_n^{\mu}(\mathbf{x},\mathbf{y}):= \sum_{i=1}^{s_d(n)} p_i(\mathbf{x})p_i(\mathbf{y}).
\end{equation}
This kernel can also be expressed in terms of the moment matrix:
\begin{equation}
    K_n^{\mu}(\mathbf{x},\mathbf{y}):=\mathbf{v}_n(\mathbf{\mathbf{x}})^T \mathbf{M}_n(\mu)^{-1}\mathbf{v}_n(\mathbf{\mathbf{y}}).
\end{equation}

\begin{definition}[The Christoffel Function]
The Christoffel Function (CF) of degree $n \in \mathbb{N}$ associated with the measure $\mu$, denoted by $\Lambda_n^{\mu}$, is given for any $\mathbf{x} \in \mathbb{R}^d$ as:
\begin{equation} \label{eq:defCF}
\Lambda_n^{\mu}(\mathbf{x})= K_n^{\mu}(\mathbf{x},\mathbf{x})^{-1},
\end{equation}
which can be shown to be the solution of this minimization problem:
\begin{equation}
    \label{eq:cf_int_p}
    \Lambda^\mu_n(\mathbf{x}) = \underset{P \in \mathbb{R}_n[\mathbf{x}]}{min}\left\{ \int_{\Omega} P(\mathbf{z})^2 ~ d\mu(\mathbf{z}), \quad P(\mathbf{x}) = 1 \right\}\,,\quad\forall \mathbf{x}\in\mathbb{R}^d\,.
\end{equation}
\end{definition}

The intuition behind the Christoffel function can be understood from the minimization problem \eqref{eq:cf_int_p}. The solution polynomial $P$ of \eqref{eq:cf_int_p} is constrained to satisfy $P(\mathbf{x}) = 1$ at a given point $\mathbf{x}$ while minimizing the $L^2$ norm $\int_\Omega P^2 \, d\mu$ over the support $\Omega$. If $\mathbf{x} \notin \Omega$ (outside the support), $P$ can be nearly zero on $\Omega$ while still satisfying $P(\mathbf{x}) = 1$, resulting in a small minimal value. Conversely, if $\mathbf{x} \in \Omega$ (inside the support), the constraint $P(\mathbf{x}) = 1$ forces $P$ to have significant magnitude on $\Omega$, preventing it from being close to zero and thus yielding a large minimal value.

One of the main and salient features of the CF is its ability to encode the support $\Omega$. The geometric shape of $\Omega$ can indeed be captured by a specific level set $\Omega_\gamma:=\{\mathbf{x}: \Lambda_n^{\mu}(\mathbf{x})^{-1}\leq \gamma\}$, defined for some $\gamma \in \mathbb{R}_+$, even for low degrees $n$ \cite{ducharlet2024leveraging}.

\subsection{The empirical Christoffel Function}
In practical applications, the theoretical measure $\mu$ is unknown. Instead, one has access to a cloud of data points $\mathbb{X}=\{\mathbf{x}_i,i=1,\dots,N\}$ sampled from $\mu$ and associated with a discrete measure $\mu_N := \frac{1}{N} \sum_{\mathbf{x} \in \mathcal{X}} \delta_{\mathbf{x}}$ supported on $\mathbb{X}$ (where $\delta_\mathbf{x}$ stands for the Dirac measure at $\mathbf{x}$).

The empirical version of the moment matrix writes as 
\begin{equation}
    \mathbf{M}_n(\mu_N) = \frac{1}{N}\sum_{\mathbf{x} \in\mathbb{X}_e}\mathbf{v}_n(\mathbf{x}) \mathbf{v}_n(\mathbf{x})^T.
\end{equation}

To guarantee the invertibility of the matrix $\mathbf{M}_n(\mu_N)$, the size of $\mathbb{X}$ must be greater than $s_d(n)$ according to \cite{lasserre2022christoffel} (Corollary 6.3.5). Under the condition $|\mathbb{X}|=N >s_d(n)$, the inverse of the empirical CF (eCF) is defined as:
\begin{equation}\label{eq:inv_CF}
    \Lambda_n^{\mu_N}(\mathbf{x})^{-1} := \mathbf{v}_n(\mathbf{x})^T \mathbf{M}_n(\mu_N)^{-1}\mathbf{v}_n(\mathbf{x}), \mathbf{x} \in \mathbb{X}.
\end{equation}

It has been proven that the eCF converges to the CF as $N$ tends to infinity~\cite{lasserre2022christoffel} (Theorem 6.2.3). This convergence ensures that the properties of the CF also hold (under some conditions) for the eCF.

Consequently, one can expect the inverse eCF $\Lambda_n^{\mu_N}(\mathbf{x})^{-1}$ to capture the support $\mathbb{X}$ thanks to a specific level set. Defining \emph{inliers} as data points in the support and \emph{outliers} or \emph{anomalies} as data points out of the support, $\Lambda_n^{\mu_N}(\mathbf{x})^{-1}$ hence comes as a good scoring function for outlier detection. 

\section{Near-anomaly detection} \label{sec:method}
The goal of this work is to automatically detect potentially problematic samples that, while not yet true anomalies, may lie near the boundary of the normal distribution although remaining technically within its limits. To achieve this, we propose to use the dichotomy property of the growth of the CF with the degree $n$ presented in the following section.

\subsection{The Christoffel function theoretical growth}
In addition to being able to capture the support of a discrete measure defined by a cloud of data points, the CF has been proven to have drastically different theoretical growths with the degree $n$ for data points in the support of the measure, considered as inliers, and for data points out of the support of the measure, considered as outliers.

Two theorems from~\cite{lasserre2022christoffel} characterize the growth of the CF as the degree $n$ increases. The growth with $n$ for data points in the support (inliers) is shown to be polynomial in Theorem~\ref{th:lemma2} whereas it is shown to be exponential for data points out of the support (outliers) in Theorem~\ref{th:lemma1}. Both cases are illustrated in Figure \ref{fig:growth} (for an outlier sample with the red graph and for an inlier sample with the green graph).

\begin{theorem} \label{th:lemma2}(Lemma 4.3.2, p.51 \cite{lasserre2022christoffel}) Let $\mu$ be a positive Borel measure supported on the compact set $\Omega$, the closure of a bounded domain $U$ with a nice boundary. Assume $\mu|_U$ has a density with respect to the Lebesgue measure $\lambda$ restricted to $U$, bounded from below by $c>0$. Let $\mathbf{x} \in \Omega$ and $\delta>0$ such that $ \operatorname{dist}(\mathbf{x}, \partial U) \geq \delta$. Then, for any $n \in \mathbb{N}, n>0$, and considering the equality \eqref{eq:defCF}, we have:
\begin{equation}\label{eq:growth_inliers}
    \Lambda_n^{\mu}(\mathbf{x})^{-1} \leq s_d(n) \frac{2\lambda(\Omega)}{c \delta^d \omega_d}(1+d)^3
\end{equation}
where $\omega_d$ depends only on $d$ and not on $n$.
\end{theorem}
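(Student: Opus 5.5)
Since $\Lambda_n^{\mu}(\mathbf{x})^{-1}=K_n^{\mu}(\mathbf{x},\mathbf{x})$, the bound in \eqref{eq:growth_inliers} is an upper bound on the Christoffel--Darboux kernel on the diagonal. By the variational formulation \eqref{eq:cf_int_p}, it is equivalent to a lower bound on $\int_\Omega P^2\,d\mu$ for every $P\in\mathbb{R}_n[\mathbf{x}]$ with $P(\mathbf{x})=1$. I would argue in three steps: monotonicity of the CF in the measure, reduction to a ball, and an explicit bound on a ball.

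\emph{Step 1 (comparison of measures).} The CF is monotone in the measure: if $\mu\geq\nu$ as measures, then $\Lambda_n^{\mu}\geq\Lambda_n^{\nu}$ pointwise, directly from \eqref{eq:cf_int_p}. Let $B=B(\mathbf{x},\delta)$. Since $\operatorname{dist}(\mathbf{x},\partial U)\geq\delta$ and $\mathbf{x}\in\Omega$, we have $B\subset U$, and the density hypothesis gives $\mu\geq c\,\lambda|_B$. Hence
\[
\Lambda_n^{\mu}(\mathbf{x})\geq c\,\Lambda_n^{\lambda_B}(\mathbf{x}),
\]
and it suffices to bound $\Lambda_n^{\lambda_B}(\mathbf{x})^{-1}$ for Lebesgue measure on a ball centred at the evaluation point. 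The CF is invariant under translation, and under the dilation $\mathbf{y}\mapsto\mathbf{x}+\delta\mathbf{y}$ it scales by $\delta^d$. This reduces the problem to Lebesgue measure on the unit ball at its centre $0$, which produces the factor $\delta^{-d}$. Normalizing by the volume of the unit ball produces $\omega_d$, up to how one writes the constant.

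\emph{Step 2 (unit ball at the centre).} We need $K_n(0,0)\leq C_d\, s_d(n)$ for normalized Lebesgue measure on the unit ball. I would write $K_n(0,0)=\sum_i p_i(0)^2$ over an orthonormal basis and note that $\int K_n(\mathbf{y},\mathbf{y})\,d\mathbf{y}=s_d(n)$. The goal is then to show that the diagonal at the centre is at most a constant multiple of its average.

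Two routes are available. The explicit route uses the known structure of ball orthogonal polynomials, namely products of Jacobi polynomials in the radius and spherical harmonics, and evaluates them at $0$. The cleaner route is a Markov/Bernstein-type argument: for $P$ of degree $n$ with $P(0)=1$, comparing $P(0)^2$ with the average of $P^2$ over a subregion gives a constant depending only polynomially on $d$. The factor $(1+d)^3$ and the $\lambda(\Omega)$ in the numerator suggest that the source normalizes $\mu$ to a probability measure, or compares with $\lambda$ on $\Omega$, and uses such a dimension-dependent comparison. I would aim to reproduce exactly that constant.

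\emph{Main obstacle.} The delicate step is Step 2: obtaining a bound \emph{linear} in $s_d(n)$ with a dimension constant like $(1+d)^3$. Crude Nikolskii inequalities give extra powers of $n$, so they are not enough. My first attempt would exploit that the centre is an interior point where the Christoffel function of the ball has the slowest growth. Concretely, I would average the one-variable Christoffel function of the Gegenbauer weight along lines through $0$, which gives order $n$ per direction. I would combine this with the dimension count of homogeneous components to get $s_d(n)$ times a polynomial in $d$. Finally, I would track the constants $2\lambda(\Omega)$ and $\omega_d$ through the normalization of Step 1.
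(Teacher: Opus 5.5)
Step~1 of your proposal is correct, but there is a genuine gap: Step~2, which is the entire content of the lemma, is not proved, and the route you favour for it rests on a wrong scaling. The paper itself gives no proof here; it quotes Lemma 4.3.2 of \cite{lasserre2022christoffel}, so the only question is whether your outline closes.

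Step~1 works as written. $B(\mathbf{x},\delta)\subset U$, monotonicity and homogeneity of $\Lambda_n$ in the measure, and affine invariance of $\mathbb{R}_n[\mathbf{x}]$ give $\Lambda_n^{\mu}(\mathbf{x})^{-1}\le K_n(0,0)/(c\,\delta^d\,\omega_d)$. Here $K_n$ is the kernel of the normalized Lebesgue measure on the unit ball and $\omega_d$ is its volume. You name the explicit route (Jacobi polynomials times spherical harmonics) but do not carry it out. Along lines through the centre, polar coordinates give exactly
\[
\int_{B(0,1)}P^2\,d\lambda=\frac{1}{2}\int_{S^{d-1}}\int_{-1}^{1}P(t\theta)^2\,|t|^{d-1}\,dt\,d\sigma(\theta).
\]
So the one-variable weight is $|t|^{d-1}$, not a Gegenbauer weight that is positive at $0$. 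It vanishes to order $d-1$ at the evaluation point, and its Christoffel function there is of order $n^{-d}$, not $n^{-1}$. A bound of order $n$ per direction would force $K_n(0,0)=O(n)$, which is false for $d\ge 2$ since $K_n(0,0)\asymp n^d$. The growth $n^d$ comes from this vanishing Jacobian, not from a count of homogeneous components. A generic Markov/Bernstein comparison likewise cannot deliver the sharp factor $s_d(n)$, as you note yourself.

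The missing step is short once this reduction is made exact.
\begin{itemize}
\item Since the weight is even and the point is $0$, the univariate minimizer is even: $q(t)=r(t^2)$ with $\deg r\le m:=\lfloor n/2\rfloor$. Then $P(\mathbf{y})=r(|\mathbf{y}|^2)$ is admissible, so $\Lambda_n$ of the ball at $0$ equals $\frac{|S^{d-1}|}{2}$ times the univariate Christoffel function.
\item The substitution $u=t^2$ turns the latter into the Christoffel function of the Jacobi weight $u^{d/2-1}$ on $[0,1]$ at the endpoint $u=0$, where the classical closed form of the Jacobi kernel applies. Equivalently, in the ball basis $P_j^{(0,k+d/2-1)}(2|\mathbf{y}|^2-1)\,Y_k(\mathbf{y})$ only the $k=0$ terms survive at the centre.
\item For the normalized measure this gives
\[
K_n(0,0)=\binom{m+d/2}{m}^2\le\binom{2m+d}{d}\le s_d(n),
\]
using Vandermonde's identity for even $d$ and a Watson-type Gamma-ratio bound for odd $d$. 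Hence $\Lambda_n^{\mu}(\mathbf{x})^{-1}\le s_d(n)/(c\,\delta^d\,\omega_d)$.
\end{itemize}

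Finally, do not try to match the printed constant literally. Push $\mu$ forward by $\mathbf{y}\mapsto t\mathbf{y}$: the left-hand side is unchanged, while the right-hand side scales by $t^d$. So if $c$ is a Lebesgue density, the stated inequality fails for small $\Omega$. The factor $\lambda(\Omega)$ makes sense only if $c$ bounds the density with respect to $\lambda/\lambda(\Omega)$. In that case Step~1 gives $\mu\ge \frac{c}{\lambda(\Omega)}\,\lambda|_{B}$, and the bound above implies the stated one.
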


In equation \ref{eq:growth_inliers} of Theorem \ref{th:lemma2}, for fixed \(d\), $s_d(n)$ is a polynomial in \(n\) of degree \(d\):
    \[
    s_d(n)=\binom{n+d}{n} = \frac{(n+d)(n+d-1)\cdots(n+1)}{d!} = \frac{n^d}{d!} + \text{(lower-order terms in } n).
    \]
So, as \(n \to \infty\), it behaves like:
    \begin{equation}\label{eq:binomial}
        \binom{n+d}{n} \sim \frac{n^d}{d!}.
    \end{equation} 
    Theorem \ref{th:lemma2} hence shows that $\Lambda_n^{\mu}(\mathbf{x})^{-1}$ has polynomial growth with $n$ inside the support.
 \begin{theorem} \label{th:lemma1}(Lemma 4.3.1, p.50 \cite{lasserre2022christoffel}) Let $\mu$ be a positive Borel measure supported on the compact set $\Omega \subset \mathbb{R}^d$. Let $\mathbf{x} \notin \Omega$ and $\delta>0$ such that $ \operatorname{dist}(\mathbf{x}, \Omega) \geq \delta$. Then, for any $n \in \mathbb{N}, n>0$, and considering the equality \eqref{eq:defCF}, we have:
\begin{equation}\label{eq:growth_outliers}
    \frac{\Lambda_n^{\mu}(\mathbf{x})^{-1}}{s_d(n)} \geq 2^{\frac{\delta n}{\delta + diam(\Omega)}-3}n^{-d}\left ( \frac{d}{e} \right)^d exp\left (- \frac{d^2}{n} \right)
\end{equation}
\end{theorem}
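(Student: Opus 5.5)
The plan is to bound $\Lambda_n^{\mu}(\mathbf{x})^{-1}$ from below through the variational characterization \eqref{eq:cf_int_p}. It is equivalent to
\[
\Lambda_n^{\mu}(\mathbf{x})^{-1}=\sup_{P\in\mathbb{R}_n[\mathbf{x}]}\frac{P(\mathbf{x})^2}{\int_\Omega P^2\,d\mu},
\]
so any single polynomial $P$ of degree at most $n$ with $P(\mathbf{x})=1$ gives $\Lambda_n^{\mu}(\mathbf{x})^{-1}\geq 1/\int_\Omega P^2\,d\mu \geq 1/(\mu(\Omega)\sup_\Omega P^2)$. The task therefore reduces to building a "needle" polynomial that is large at the exterior point $\mathbf{x}$ and uniformly small on $\Omega$. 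The smallness must be exponential in $n$, with rate governed by $\delta/(\delta+\operatorname{diam}(\Omega))$. I will normalise $\mu$ to be a probability measure, or else carry $\mu(\Omega)$ as a constant.

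\textbf{Construction.} I would reduce to one dimension by choosing a univariate polynomial in a suitable affine function of $\mathbf{z}$. Fix a point $\mathbf{a}\in\Omega$ and let $D=\operatorname{diam}(\Omega)$. Every $\mathbf{z}\in\Omega$ satisfies $\|\mathbf{z}-\mathbf{x}\|\geq\delta$ and $\|\mathbf{z}-\mathbf{x}\|\leq \|\mathbf{x}-\mathbf{a}\|+D$.

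The first candidate is
\[
P(\mathbf{z})=\Bigl(1-\tfrac{\|\mathbf{z}-\mathbf{x}\|^2}{R^2}\Bigr)^{k},\qquad R^2=\max_{\mathbf{z}\in\Omega}\|\mathbf{z}-\mathbf{x}\|^2,\quad k=\lfloor n/2\rfloor.
\]
It is a polynomial of degree $2k\leq n$ with $P(\mathbf{x})=1$, and on $\Omega$ it satisfies $0\leq P\leq (1-\delta^2/R^2)^{k}$. This already gives exponential growth, but the rate depends on $R$ rather than $D$, and it is quadratic in $\delta$. To obtain the exact exponent $\frac{\delta n}{\delta+D}$ I would instead use a Chebyshev polynomial $T_n$ composed with a linear functional. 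Project onto the direction from $\mathbf{x}$ to the nearest point of $\Omega$. If $\Omega$ is not convex, first replace it by its convex hull; this does not change the diameter, but it may shrink the distance, so this step needs care, possibly by working with a ball of radius $D$ around a point of $\Omega$. The projection of $\Omega$ then lies in an interval of length at most $D$, at distance at least $\delta$ from the projection of $\mathbf{x}$. Rescale that interval to $[-1,1]$; the point $\mathbf{x}$ maps to some $t_0$ with $|t_0|\geq 1+2\delta/D$. Setting $P=T_n(\ell(\mathbf{z}))/T_n(t_0)$ gives $\sup_\Omega P^2\leq T_n(t_0)^{-2}$, and the standard estimate $T_n(t)\geq \tfrac12\bigl(t+\sqrt{t^2-1}\bigr)^n$ yields a bound of the form $2^{c\,\delta n/(\delta+D)}$ after the elementary inequality $\log_2(1+u)\geq u/(1+u)$.

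\textbf{Normalisation.} The statement divides by $s_d(n)$, and the right-hand side carries the factors $n^{-d}(d/e)^d\exp(-d^2/n)$ and $2^{-3}$. These look like a lower bound for $1/s_d(n)$, coming from $\binom{n+d}{d}\leq (n+d)^d/d!$ together with Stirling ($d!\geq (d/e)^d$) and $(1+d/n)^d\leq e^{d^2/n}$. The chain I would write is
\[
\frac{\Lambda_n^{\mu}(\mathbf{x})^{-1}}{s_d(n)}\geq \frac{T_n(t_0)^2}{s_d(n)}\geq 2^{\frac{\delta n}{\delta+D}-3}\,\frac{d!}{(n+d)^d},
\]
followed by the Stirling estimates above. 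The constant $2^{-3}$ absorbs the factor $\tfrac14$ from $T_n\geq\tfrac12(\cdot)^n$ together with rounding.

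The main obstacle is the geometric step: getting the exponent in precisely the form $\delta n/(\delta+\operatorname{diam}\Omega)$ for a general compact, possibly nonconvex, $\Omega$. A one-directional projection controls only the convex hull, so I expect to enclose $\Omega$ in a ball $B(\mathbf{a},D)$ and place $\mathbf{x}$ relative to it. I would first try the radial polynomial $\bigl(\|\mathbf{z}-\mathbf{a}\|^2\bigr)$ composed with a Chebyshev polynomial of degree $\lfloor n/2\rfloor$. Since $\|\mathbf{x}-\mathbf{a}\|\geq D+\delta$ is not guaranteed, I would choose $\mathbf{a}$ to be the point of $\Omega$ nearest $\mathbf{x}$, and accept a weaker but still exponential rate if the sharp constant cannot be recovered.
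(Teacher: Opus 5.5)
\textbf{There is a genuine gap: none of your three constructions proves the core estimate.} Your reduction is sound. For a probability measure the variational bound gives $\Lambda_n^{\mu}(\mathbf{x})^{-1}\ge 1/\sup_\Omega P^2$. Your normalisation $1/s_d(n)\ge d!/(n+d)^d\ge n^{-d}(d/e)^d e^{-d^2/n}$ is exactly what produces the extra factors. You are also right to normalise $\mu$: replacing $\mu$ by $c\mu$ divides $\Lambda_n^{\mu}(\mathbf{x})^{-1}$ by $c$, so the statement as written, for an arbitrary positive measure, is false. What is missing is the estimate $\sup_\Omega P^2\le 2^{3-\delta n/(\delta+D)}$ for a general compact $\Omega$.

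The linear Chebyshev projection fails as soon as $\mathbf{x}\in\operatorname{conv}(\Omega)$, for example $\Omega=\{1\le\|\mathbf{z}\|\le 2\}\subset\mathbb{R}^2$ and $\mathbf{x}=0$. Passing to the hull then does not merely shrink the distance, it makes it zero. Your rescaling puts $t_0$ in $[-1,1]$, and $|T_n(t_0)|\le 1=\sup_\Omega|T_n\circ\ell|$, so you gain nothing.

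Your fallback, a radial polynomial in $\|\mathbf{z}-\mathbf{a}\|^2$ with $\mathbf{a}$ the nearest point of $\Omega$, fails in the same way. It takes the same value at $\mathbf{x}$ as on the whole sphere of radius $\operatorname{dist}(\mathbf{x},\Omega)$ about $\mathbf{a}$, and that sphere generally meets $\Omega$: in the annulus, $\mathbf{a}=(1,0)$ and $(2,0)\in\Omega$. So it gives no growth at all, not a weaker rate.

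Your first candidate is centred correctly at $\mathbf{x}$. However, $(1-\delta_0^2/R^2)^k\approx e^{-k\rho^2}$ is quadratic in $\rho$ and much weaker than $2^{-\rho n}$ for small $\rho$, so it cannot give the stated exponent.

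\textbf{The missing idea} is to apply the Chebyshev acceleration to the radial polynomial centred at $\mathbf{x}$. This is the Kro\'o--Lubinsky needle polynomial; the paper itself only quotes this lemma, and I believe the cited source uses this construction. Let
\[
\delta_0=\operatorname{dist}(\mathbf{x},\Omega)\ge\delta,\quad D=\operatorname{diam}(\Omega),\quad R=\delta_0+D,\quad \rho=\delta_0/R,\quad k=\lfloor n/2\rfloor .
\]
Going through a nearest point, every $\mathbf{z}\in\Omega$ satisfies $\delta_0\le\|\mathbf{z}-\mathbf{x}\|\le R$, so your worry about $R$ versus $D$ disappears. Define
\[
q(\mathbf{z})=\frac{T_k\bigl(1+\rho^2-\|\mathbf{z}-\mathbf{x}\|^2/R^2\bigr)}{T_k(1+\rho^2)},
\]
which has degree $2k\le n$ and $q(\mathbf{x})=1$.

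On $\Omega$ the argument of $T_k$ lies in $[\rho^2,1]\subset[-1,1]$, so $|q|\le 1/T_k(1+\rho^2)$. Using convexity of $2^s$ on $[0,1]$, you get $T_k(1+\rho^2)\ge\frac12\bigl(1+\rho^2+\rho\sqrt{2+\rho^2}\bigr)^k\ge\frac12(1+\rho)^k\ge 2^{\rho k-1}$. Hence
\[
\Lambda_n^{\mu}(\mathbf{x})\le\int q^2\,d\mu\le 4\cdot 2^{-2\rho k}\le 2^{3-\rho n},
\]
using $2k\ge n-1$ and $\rho\le 1$; this is where the $-3$ comes from. Since $s\mapsto s/(s+D)$ is increasing, $\rho\ge\delta/(\delta+D)$. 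Your normalisation step then finishes the proof.
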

Theorem \ref{th:lemma1} clearly shows that $\Lambda_n^{\mu}(\mathbf{x})^{-1}$ has exponential growth with $n$ outside the support.

\begin{figure}[t]
    \centering
    \includegraphics[width=0.9\linewidth]{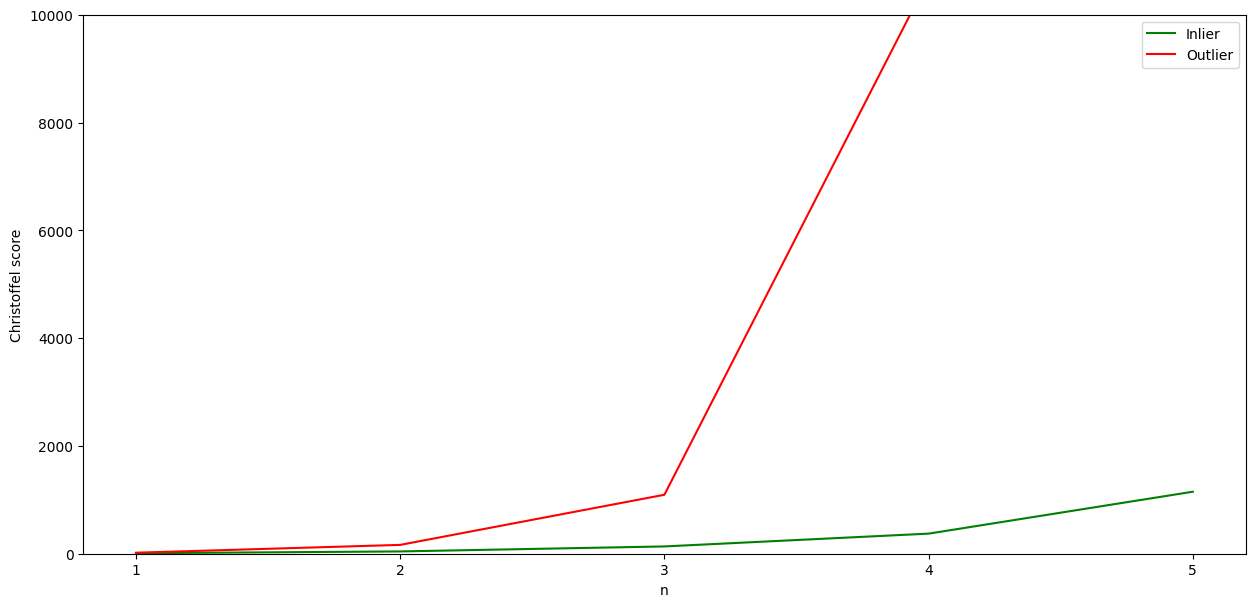}
    \caption{The figure represents the growth of an outlier in the red graph and of an inlier in the green graph as $n$ grows from 1 to 5. The red graph increases much faster than the green graph, showing the dichotomy property stated by Theorems~\ref{th:lemma2} and \ref{th:lemma1}.}
    \label{fig:growth}
\end{figure}

This theoretical growth with respect to the parameter $n$ has already been leveraged by DyCG~\cite{ducharlet2024leveraging}, which proposes a parameter-free outlier detection approach for data streams. DyCG evaluates the difference between two eCFs computed for distinct values of the parameter $n$, normalized by the difference between the corresponding values of $n$. For a fixed $x \in \mathbb{X}$, this score computes the slope of the function depending on $n$.

\subsection{The Christoffel function growth ratio}
To leverage the dichotomy property stated by Theorems \ref{th:lemma2} and \ref{th:lemma1}, and given that $\Lambda^{\mu}_{n}(\mathbf{x})^{-1}$ is strictly positive, we propose to define the following \emph{CF growth ratio}, based on two different values of $n$, namely $n_1$ and $n_2$, $n_1 < n_2$ to compare eCF scores.

\begin{definition}[CF growth ratio]
Given $\mathbf{x} \in \mathbb{X}$, the \emph{CF growth ratio} is defined for $n_1, n_2 \in \mathbb{N}^*, n_1 < n_2$ as follows:
    \begin{equation}\label{eq:ratio1}
    \frac{\Lambda^{\mu}_{n_2}(\mathbf{x})^{-1}}{\Lambda^{\mu}_{n_1}(\mathbf{x})^{-1}}
\end{equation}
\end{definition}

A ratio was preferred to a normalized difference, such as the one used in DyCF, for two main reasons: 
\begin{enumerate}
\item The ratio eliminates the effect of the absolute scale of the scores and avoids normalization, enabling a direct comparison of the degree's impact on the metric without bias related to the magnitude of the raw values,
\item The CF growth ratio comes with two interesting asymptotic properties.
\end{enumerate}
The relative magnitude of the eCFs obtained for two different values of $n$ can be captured through their ratio. Figure~\ref{fig:growthRatio} illustrates the eCF score for $n_1=1$ along the horizontal axis and the eCF score for $n_2=4$ along the vertical axis. This figure highlights high values for outliers, denoted by red crosses, while inliers, denoted by green crosses, have lower values. The growth ratio characterizes the relative change of the CF as $n$ increases, whereas DyCG~\cite{ducharlet2024leveraging} characterizes the absolute change. Figure~\ref{fig:growthRatioScore} displays the ratio between two eCFs for $n_1=1$ and $n_2=2$ for inliers, in green, and outliers, in red. Consistently to the dichotomy property and Figure~\ref{fig:growthRatio}, outlier samples have the highest score.

\begin{figure}[t]
    \centering
    \includegraphics[width=0.9\linewidth]{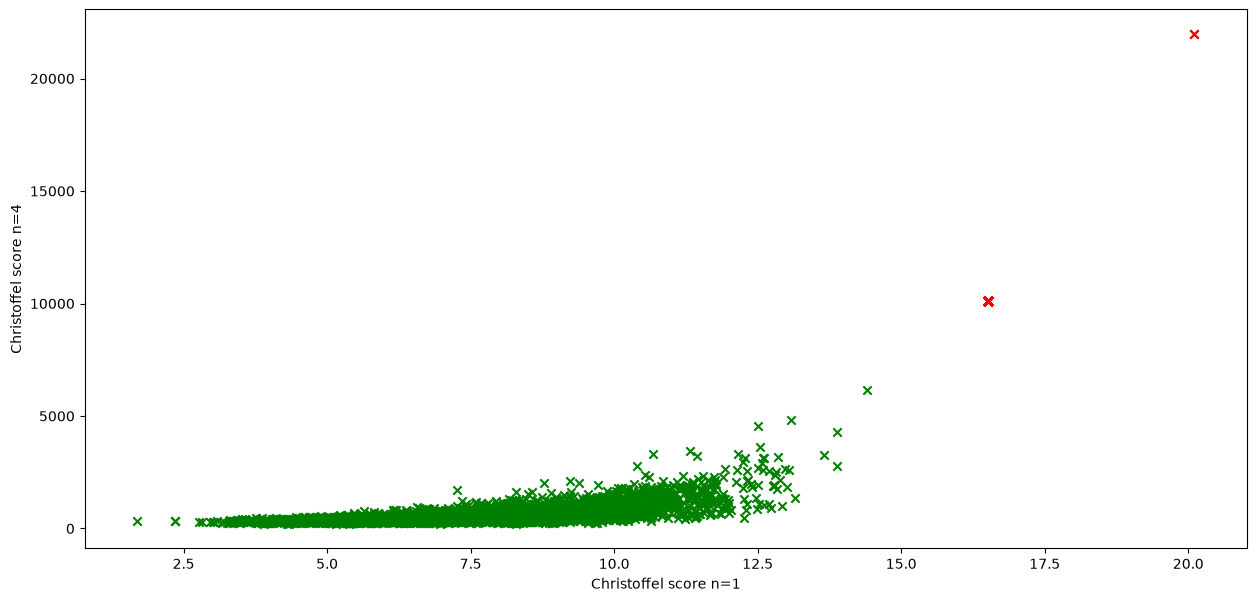}
    \caption{The figure illustrates the eCF obtained for $n_1=1$ alongside that obtained for $n_2=4$. Inliers are in green and outliers are in red. Outlier and inlier scores are clearly distinguishable.}
    \label{fig:growthRatio}
\end{figure}

\begin{figure}[t]
    \centering
    \includegraphics[width=0.9\linewidth]{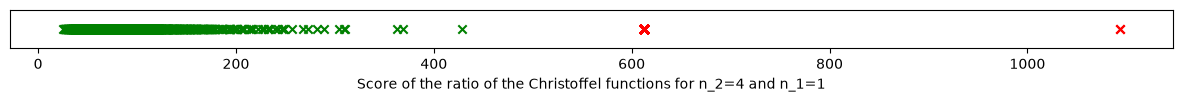}
    \caption{The figure presents the values of the ratio between the two eCFs computed for $n_1=1$ and $n_2=4$. Inliers are represented in green, while the outliers are represented in red. A clear separation is observed, outliers are associated with much higher scores than inliers.}
    \label{fig:growthRatioScore}
\end{figure}

\begin{corollary}\label{cor:inliers}
Given $\mathbf{x} \in \mathbb{X}$, and $n_1, n_2 \in \mathbb{N}^{*}, n_1<n_2$, the CF growth ratio of $\mathbf{x}$ behaves like:
\begin{equation}\label{eq:ratio1.1}
    \frac{\Lambda^{\mu}_{n_2}(\mathbf{x})^{-1}}{\Lambda^{\mu}_{n_1}(\mathbf{x})^{-1}} \sim \left(\frac{n_2}{n_1}\right)^p,
\end{equation}
where $p \in \mathbb{N}^*$ is a constant.
\end{corollary}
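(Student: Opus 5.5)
The plan is to read the corollary as a statement about inliers, $\mathbf{x}$ in the interior of the support, and to read "behaves like" heuristically, as an order-of-growth statement in the degrees. It then follows from combining the polynomial bound of Theorem~\ref{th:lemma2} with the asymptotic \eqref{eq:binomial}.

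\textbf{Upper bound at degree $n_2$.} Fix $\mathbf{x}$ with $\operatorname{dist}(\mathbf{x},\partial U)\geq\delta$. Theorem~\ref{th:lemma2} gives
\[
\Lambda^{\mu}_{n_2}(\mathbf{x})^{-1}\leq C\, s_d(n_2),
\]
where $C=\frac{2\lambda(\Omega)}{c\delta^d\omega_d}(1+d)^3$ is independent of $n$.

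\textbf{Lower bound at degree $n_1$.} A matching lower bound of the same polynomial order is needed for the denominator. The plan is to use the variational characterization \eqref{eq:cf_int_p}. Since $\mu$ has density at least $c$ near $\mathbf{x}$ and $\mu$ is finite, the standard estimate is $\Lambda_n^{\mu}(\mathbf{x})\lesssim$ (measure of a ball of radius $\sim 1/n$ around $\mathbf{x}$) up to constants. Equivalently, $\Lambda_n^{\mu}(\mathbf{x})^{-1}\geq c'\, s_d(n)$ for interior points. This is the classical two-sided "$\Lambda_n^{-1}\asymp n^d$" behaviour inside the support (cf.\ the results surrounding Lemma 4.3.2 in \cite{lasserre2022christoffel}).

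\textbf{Forming the ratio.} With $\Lambda^{\mu}_n(\mathbf{x})^{-1}\asymp s_d(n)$, we get
\[
\frac{\Lambda^{\mu}_{n_2}(\mathbf{x})^{-1}}{\Lambda^{\mu}_{n_1}(\mathbf{x})^{-1}}\asymp\frac{s_d(n_2)}{s_d(n_1)}.
\]
Applying \eqref{eq:binomial}, the right-hand side is $\sim \frac{n_2^d/d!}{n_1^d/d!}=(n_2/n_1)^d$. This yields \eqref{eq:ratio1.1} with $p=d$, or more generally $p$ equal to the effective polynomial growth exponent, which explains why the statement leaves $p$ as an unspecified constant. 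The corollary is stated for the empirical $\mathbf{x}\in\mathbb{X}$, so we transfer it to $\mu_N$ via the convergence of the eCF to the CF (Theorem 6.2.3 of \cite{lasserre2022christoffel}), for $N$ large.

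\textbf{Main obstacle.} The difficulty is the lower bound. Theorem~\ref{th:lemma2} as quoted is one-sided, so a genuine asymptotic equivalence needs the reverse inequality. I would take this from the known asymptotics of the Christoffel function on the interior of the support: $n^{-d}\Lambda_n^{-1}$ converges to a positive finite limit, governed by the density and the equilibrium measure. Failing that, I would state the result honestly in the weaker form: the ratio is bounded above by a constant times $(n_2/n_1)^d$, i.e.\ polynomial in $n_2/n_1$. I would also make explicit that "$\sim$" here is meant for large $n_1,n_2$ with fixed ratio, since for small fixed degrees it is only an order-of-magnitude statement.
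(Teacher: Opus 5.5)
Your skeleton is the paper's own. Its proof reads Theorem~\ref{th:lemma2} directly as $\Lambda^{\mu}_n(\mathbf{x})^{-1}\sim C_d n^p$ and divides the two equivalents, never addressing the reverse inequality. So your remark that Theorem~\ref{th:lemma2} is one-sided is a genuine improvement, and $p=d$ is consistent with the Remark after the corollary.

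The gap is in how you supply the lower bound. A density bounded \emph{below} by $c$ is precisely what yields Theorem~\ref{th:lemma2}: it gives $\int P^2\,d\mu\geq c\int_{B(\mathbf{x},\delta)}P^2\,d\lambda$, hence $\Lambda^\mu_n(\mathbf{x})\geq c\,\Lambda_n^{\lambda|_{B(\mathbf{x},\delta)}}(\mathbf{x})$, which is an \emph{upper} bound on $\Lambda^\mu_n(\mathbf{x})^{-1}$. For the reverse direction you plug a needle polynomial into \eqref{eq:cf_int_p}. Bounding $\int P^2\,d\mu$ by $C'n^{-d}$ (summing over shells around $\mathbf{x}$) then needs something like $\mu(B(\mathbf{x},r))\leq Mr^d$, i.e.\ a density bounded \emph{above} near $\mathbf{x}$, which is not a hypothesis. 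Without it the claim fails. Take $d=1$, the measure with density $1+|t|^{-1/2}$ on $\Omega=[-1,1]$, and $\mathbf{x}=0$. Every hypothesis of Theorem~\ref{th:lemma2} holds, but since this is a doubling weight, $\Lambda^\mu_n(0)\asymp\mu([-1/n,1/n])\asymp n^{-1/2}$. Hence $\Lambda^\mu_n(0)^{-1}\asymp n^{1/2}$, and the ratio is of order $(n_2/n_1)^{1/2}$, not $(n_2/n_1)^d$ (nor with an integer exponent).

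So an extra assumption must be stated explicitly rather than borrowed as ``the classical behaviour''. A density bounded above and below near $\mathbf{x}$ gives $\Lambda^\mu_n(\mathbf{x})^{-1}\asymp s_d(n)$, hence the ratio only up to a multiplicative constant. An honest $\sim$ as $n_1,n_2\to\infty$ needs $\lim n^{-d}\Lambda^\mu_n(\mathbf{x})^{-1}$ to exist in $(0,\infty)$. That is known only under stronger hypotheses: a continuous positive density near $\mathbf{x}$, on domains where the equilibrium-measure asymptotics are established.

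Your fallback, a bound by a constant times $(n_2/n_1)^d$, uses the same missing lower bound if the constant is to be independent of $n_1$. Theorem~\ref{th:lemma2} plus monotonicity, $\Lambda^\mu_{n_1}(\mathbf{x})^{-1}\geq\Lambda^\mu_0(\mathbf{x})^{-1}=1/\mu(\Omega)$, only gives a ratio at most $C\mu(\Omega)\,s_d(n_2)$.

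Finally, the transfer to $\mu_N$ is not needed for the statement as written, which is about $\Lambda^\mu$. The convergence result cited from \cite{lasserre2022christoffel} (Theorem 6.2.3) is in $N$ at fixed degree. It carries over the ratio for fixed $(n_1,n_2)$, but not any asymptotics in $n$: at fixed $N$ one needs $s_d(n)<N$, and at a sample point $\Lambda^{\mu_N}_n(\mathbf{x})^{-1}\leq N$.
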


\begin{proof}
Given $\mathbf{x} \in \mathbb{X}$ and $n \in \mathbb{N}^{*}$, from Theorem \ref{th:lemma2}, there exist $C_d \in \mathbb{R}^{*+}$ and $p \in \mathbb{N}^*$ such that, $\Lambda^{\mu}_n(\mathbf{x})^{-1}$ behaves like:
\begin{equation}
    \Lambda^{\mu}_n(\mathbf{x})^{-1} \sim C_dn^p
\end{equation}
Taking $n_1, n_2 \in \mathbb{N}^*, n_1 < n_2$, we have $\Lambda^{\mu}_{n_1}(\mathbf{x})^{-1} \sim C_d(n_1)^{p}$ and $\Lambda^{\mu}_{n_2}(\mathbf{x})^{-1} \sim C_d(n_2)^{p}$, which proves Corollary \ref{cor:inliers}.
\end{proof}

\begin{remark}
    Given $\mathbf{x} \in \mathbb{X}$, and $n_1, n_2 \in \mathbb{N}^{*}, n_1 \to \infty$ and $n_2 \to \infty$, the CF growth ratio of $\mathbf{x}$ behaves asymptotically like:
\begin{equation}\label{eq:ratio1.2}
    \frac{\Lambda^{\mu}_{n_2}(\mathbf{x})^{-1}}{\Lambda^{\mu}_{n_1}(\mathbf{x})^{-1}} \sim \left(\frac{n_2}{n_1}\right)^d,
\end{equation}
\end{remark}
\begin{corollary}\label{cor:outliers}
Given $\mathbf{x} \notin \mathbb{X}$, and $n_1, n_2 \in \mathbb{N}^{*}, n_1<n_2$, the CF growth ratio of $\mathbf{x}$ behaves like:
\begin{equation}\label{eq:ratio1.3}
    \frac{\Lambda^{\mu}_{n_2}(\mathbf{x})^{-1}}{\Lambda^{\mu}_{n_1}(\mathbf{x})^{-1}} \sim e^{(n_2-n_1)b},
\end{equation}
where $b \in \mathbf{R}^{*+}$ is a constant. 
\end{corollary}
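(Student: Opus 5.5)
The plan is to mirror the proof of Corollary~\ref{cor:inliers}, using Theorem~\ref{th:lemma1} in place of Theorem~\ref{th:lemma2}. Fix $\mathbf{x}\notin\mathbb{X}$, read (as the paper does) as a point outside the support $\Omega$, so that $\delta:=\operatorname{dist}(\mathbf{x},\Omega)>0$. The first step is to turn the lower bound \eqref{eq:growth_outliers} into a statement about how $\Lambda^{\mu}_n(\mathbf{x})^{-1}$ behaves as a function of $n$. Multiplying \eqref{eq:growth_outliers} by $s_d(n)\sim n^d/d!$ from \eqref{eq:binomial}, the factors $n^{-d}$ and $n^d$ cancel, and $\exp(-d^2/n)\to 1$. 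What remains is
\begin{equation*}
\Lambda^{\mu}_n(\mathbf{x})^{-1} \gtrsim C\, e^{b n}, \qquad b:=\frac{\delta\ln 2}{\delta+\operatorname{diam}(\Omega)}>0,\qquad C:=\frac{2^{-3}}{d!}\left(\frac{d}{e}\right)^d .
\end{equation*}
Here $C$ and $b$ depend only on $d$, $\delta$ and $\Omega$, not on $n$.

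The second step is to upgrade this one-sided bound to a two-sided estimate $\Lambda^{\mu}_n(\mathbf{x})^{-1}\sim C' e^{bn}$, possibly with an adjusted rate. I would get the upper bound from the definition $\Lambda^{\mu}_n(\mathbf{x})^{-1}=\mathbf{v}_n(\mathbf{x})^T\mathbf{M}_n(\mu)^{-1}\mathbf{v}_n(\mathbf{x})$. By Cauchy--Schwarz this is at most $\|\mathbf{v}_n(\mathbf{x})\|^2/\lambda_{\min}(\mathbf{M}_n(\mu))$. The numerator grows at most like $s_d(n)\max(1,\|\mathbf{x}\|_\infty)^{2n}$, which is exponential in $n$. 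The smallest eigenvalue of the moment (Hankel) matrix decays at most exponentially for a measure with non-empty interior support. Together these show that $n\mapsto\Lambda^{\mu}_n(\mathbf{x})^{-1}$ is sandwiched between two exponentials. We then adopt, as in the proof of Corollary~\ref{cor:inliers}, the modelling step that it behaves like $C'e^{bn}$ for some constant $b\in\mathbb{R}^{*+}$.

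The final step is immediate. For $n_1<n_2$, the prefactor $C'$ cancels in the quotient and
\begin{equation*}
\frac{\Lambda^{\mu}_{n_2}(\mathbf{x})^{-1}}{\Lambda^{\mu}_{n_1}(\mathbf{x})^{-1}}\sim\frac{C'e^{bn_2}}{C'e^{bn_1}}=e^{(n_2-n_1)b},
\end{equation*}
which is \eqref{eq:ratio1.3}.

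The main obstacle is the second step. Theorem~\ref{th:lemma1} only provides a lower bound, and the upper and lower exponential rates need not coincide. A genuine $\sim$ equivalence with a single rate $b$ therefore requires either an extra regularity assumption on $\mu$ or a looser reading of ``behaves like''. The most accurate reading is that the rate is exponential, with $b$ at least $\delta\ln 2/(\delta+\operatorname{diam}(\Omega))$, at the same level of rigour as Corollary~\ref{cor:inliers}. I would try first to state the result rigorously in the weaker form $\liminf_{n}\frac1n\ln\Lambda^{\mu}_n(\mathbf{x})^{-1}\ge b$, and then derive the quotient from that.
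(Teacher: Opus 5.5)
Your step~1, modelling step and step~3 are exactly the paper's proof. The paper asserts from Theorem~\ref{th:lemma1} that $\Lambda^{\mu}_n(\mathbf{x})^{-1}\sim a e^{nb}$ for some $a,b>0$ and lets $a$ cancel in the quotient, with no upper bound at all. At the paper's level of rigour you therefore match it, and your diagnosis that a one-sided bound cannot fix a single rate $b$ is right. However, the two things you add beyond the paper would fail as written.

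First, your claim in step~2 that $\lambda_{\min}(\mathbf{M}_n(\mu))$ decays at most exponentially whenever $\Omega$ has non-empty interior is false for a general positive Borel measure. That is all Theorem~\ref{th:lemma1}, and hence the corollary, assumes. Take $d=1$, $\mu=\sum_{k\geq 1}e^{-e^k}\delta_{q_k}$ with $(q_k)$ an enumeration of $\mathbb{Q}\cap[0,1]$, so $\Omega=[0,1]$, and take $\mathbf{x}=2$. The polynomial $P(t)=\prod_{k\leq n}(t-q_k)/(2-q_k)$ has degree $n$, with $P(2)=1$, $|P|\leq 1$ on $[0,1]$, and $P(q_k)=0$ for $k\leq n$. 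Hence $\Lambda^{\mu}_n(2)^{-1}\geq\bigl(\sum_{k>n}e^{-e^k}\bigr)^{-1}$ grows doubly exponentially. By your own Rayleigh bound, $\lambda_{\min}(\mathbf{M}_n(\mu))$ then decays super-exponentially. So without further hypotheses, not only your sandwich but the statement $\Lambda^{\mu}_n(\mathbf{x})^{-1}\sim ae^{nb}$ itself is false. Your upper bound does work if $\mu$ has a density bounded below by some $c>0$ on a ball $B$, as in Theorem~\ref{th:lemma2}. Then $\int P^2\,d\mu\geq c\int_B P^2\,d\lambda$ for all $P$, so $\lambda_{\min}(\mathbf{M}_n(\mu))\geq c\,\lambda_{\min}(\mathbf{M}_n(\lambda|_B))$, which decays only exponentially. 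That assumption has to be imposed explicitly, not mentioned as an option.

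Second, your fallback cannot deliver the quotient. A bound $\liminf_n\frac{1}{n}\ln\Lambda^{\mu}_n(\mathbf{x})^{-1}\geq b$ controls only the numerator from below. Any statement about $\Lambda^{\mu}_{n_2}(\mathbf{x})^{-1}/\Lambda^{\mu}_{n_1}(\mathbf{x})^{-1}$ also needs an upper bound on the denominator with the same rate. Suppose you even had exact $n$-th root asymptotics $\frac{1}{n}\ln\Lambda^{\mu}_n(\mathbf{x})^{-1}\to b$. Pluripotential theory provides these for sufficiently regular measures, with $b=2V_\Omega(\mathbf{x})$ where $V_\Omega$ is the Siciak extremal function; $\delta\ln 2/(\delta+\operatorname{diam}(\Omega))$ is only a lower bound for it. They give merely $\ln\bigl(\Lambda^{\mu}_{n_2}(\mathbf{x})^{-1}/\Lambda^{\mu}_{n_1}(\mathbf{x})^{-1}\bigr)=(n_2-n_1)b+o(n_2)$, which says nothing when $n_2-n_1$ is fixed. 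The relation $\sim e^{(n_2-n_1)b}$ requires ratio asymptotics $\Lambda^{\mu}_{n+1}(\mathbf{x})^{-1}/\Lambda^{\mu}_{n}(\mathbf{x})^{-1}\to e^{b}$. These hold, for instance, for Lebesgue measure on $[-1,1]$ at a point $x>1$, with $e^{b}=(x+\sqrt{x^2-1})^2$, but they cannot be extracted from Theorem~\ref{th:lemma1}. The corollary therefore rests, as in the paper, on the modelling assumption itself. Your step~1 is still a useful addition, since it gives an explicit admissible lower value for the rate.
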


\begin{proof}
Given $\mathbf{x} \notin \mathbb{X}$ and $n\in \mathbb{N}^{*}$, from Theorem \ref{th:lemma1}, there exist $a, b \in \mathbf{R}^{+*}$ such that $\Lambda^{\mu}_n(\mathbf{x})^{-1}$ behaves like: 
\begin{equation}
    \Lambda^{\mu}_n(\mathbf{x})^{-1} \sim ae^{nb}
\end{equation}
Taking $n_1, n_2 \in \mathbb{N}^{*}, n_1 < n_2$, we have $\Lambda^{\mu}_{n_1}(\mathbf{x})^{-1} \sim ae^{n_1b}$ and $\Lambda^{\mu}_{n_2}(\mathbf{x})^{-1} \sim ae^{n_2b}$, which proves Corollary \ref{cor:outliers}.
\end{proof}

\subsection{Near-anomaly definition}
Given that the eCF inherits the properties of the CF, this section leverages the growth properties of the CF given by Theorems \ref{th:lemma2} and \ref{th:lemma1} and the two Corollaries \ref{cor:inliers} and \ref{cor:outliers} to define near-anomalies in a practical setting.

\begin{definition}[eCF growth ratio]
Given $\mathbf{x} \in \mathbb{X}$, the \emph{eCF growth ratio} $\mathcal{G}_{n_2/n_1}($ is defined for $n_1, n_2 \in \mathbb{N}^{*}, n_1 < n_2$ as follows:
    \begin{equation}\label{eq:ratio2}
    \mathcal{G}_{n_2/n_1}(\mathbf{x}) =\frac{\Lambda^{\mu_N}_{n_2}(\mathbf{x})^{-1}}{\Lambda^{\mu_N}_{n_1}(\mathbf{x})^{-1}}
\end{equation}
\end{definition}
The following definition defines \emph{near-anomalies} thanks to the eCF growth ratio.
 \begin{definition}[Near-anomaly definition] Given the eCF growth ratio for $n_1, n_2 \in \mathbb{N}^{*}, n_1 < n_2$, a data point $\mathbf{x}\in \mathbb{X}$ is defined as a near-anomaly if $\mathcal{G}_{n_2/n_1}(\mathbf{x})>\tau_{na}$, where $\tau_{na} \in \mathbb{R}$ is a given threshold.
 \end{definition}
This definition proposes to define near-anomalies as data points that are inliers but whose eCF shows a growth significantly superior to core inliers. Interestingly, in the following section, we derive a method to set automatically the near-anomaly threshold $\tau_{na}$.

 \subsection{Automatic determination of the near-anomaly threshold}
 The results provided in this section allow to derive a method to set automatically the near-anomaly threshold.

From Corollaries \ref{cor:inliers} and \ref{cor:outliers}, the eCF growth ratio $\mathcal{G}_{n_2/n_1}(\mathbf{x})$ is expected to remain relatively low for inliers ($\mathbf{x} \in \mathbb{X}$, where $\mathbb{X}$ corresponds to the support of $\mu_N$) while being much higher for outliers/anomalies ($\mathbf{x} \notin \mathbb{X}$). Near-anomalies are inliers but they lie in a range in which the eCF growth ratio is significantly superior to that of core inliers. Let us define as $\mathbb{X}_{c} \subset \mathbb{X}$ the set of core inliers and by $\mathbb{X}_{na} \subset \mathbb{X}$ the set of near-anomalies that we want to identify. For this purpose, there is a theoretical result that can be useful. The following lemma indeed gives the theoretical mean of the eCF on the support of the distribution.

\begin{lemma}[Theoretical mean value of eCF in the support] \label{th:lemma3}
    The theoretical mean value of the eCF over the support of $\mu_N$ is $s_d(n)$.
\end{lemma}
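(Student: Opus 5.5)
The plan is to prove the statement as a trace identity for the empirical moment matrix. Here "mean value of the eCF over the support of $\mu_N$" is read as the average of the inverse eCF score $\Lambda_n^{\mu_N}(\mathbf{x})^{-1}$ with respect to $\mu_N$:
\[
\int \Lambda_n^{\mu_N}(\mathbf{x})^{-1}\, d\mu_N(\mathbf{x}) = \frac{1}{N}\sum_{\mathbf{x}\in\mathbb{X}} \mathbf{v}_n(\mathbf{x})^T \mathbf{M}_n(\mu_N)^{-1}\mathbf{v}_n(\mathbf{x}).
\]
Throughout, I work under the standing assumption $N > s_d(n)$, so that $\mathbf{M}_n(\mu_N)$ is invertible and the inverse eCF in \eqref{eq:inv_CF} is well defined.

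\textbf{Step 1.} Each summand is a scalar, so it equals its own trace. The cyclic property of the trace then gives
\[
\mathbf{v}_n(\mathbf{x})^T \mathbf{M}_n(\mu_N)^{-1}\mathbf{v}_n(\mathbf{x}) = \operatorname{tr}\!\left(\mathbf{M}_n(\mu_N)^{-1}\mathbf{v}_n(\mathbf{x})\mathbf{v}_n(\mathbf{x})^T\right).
\]

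\textbf{Step 2.} By linearity of the trace, average over $\mathbb{X}$ and recognise the definition of the empirical moment matrix:
\[
\frac{1}{N}\sum_{\mathbf{x}\in\mathbb{X}}\operatorname{tr}\!\left(\mathbf{M}_n(\mu_N)^{-1}\mathbf{v}_n(\mathbf{x})\mathbf{v}_n(\mathbf{x})^T\right) = \operatorname{tr}\!\left(\mathbf{M}_n(\mu_N)^{-1}\mathbf{M}_n(\mu_N)\right).
\]

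\textbf{Step 3.} The right-hand side is $\operatorname{tr}(I_{s_d(n)}) = s_d(n)$, since $\mathbf{M}_n(\mu_N)$ is a square matrix of size $s_d(n)=\binom{d+n}{n}$.

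The same computation, with $\mu_N$ replaced by $\mu$ and the sum replaced by an integral, gives $\int \Lambda_n^{\mu}(\mathbf{x})^{-1}\,d\mu(\mathbf{x}) = s_d(n)$ for the theoretical CF. An equivalent route uses the orthonormal basis in \eqref{eq:defCDKernel}: one has $\int K_n^{\mu}(\mathbf{x},\mathbf{x})\,d\mu = \sum_i \int p_i^2\,d\mu = s_d(n)$. I will mention this briefly to connect with the kernel definition.

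The computation itself is routine. The main obstacle is interpretive: I must fix precisely what "mean of the eCF" means. It has to be the mean of the inverse CF $\Lambda_n^{\mu_N}(\mathbf{x})^{-1}$, taken with respect to the empirical measure itself rather than, say, the uniform measure on the support. I also need to state the invertibility hypothesis $N > s_d(n)$ explicitly, since the identity fails without it.
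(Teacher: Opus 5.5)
Your proof is correct. It differs mildly from the paper's in route, not substance.

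The paper works from the kernel form \eqref{eq:defCDKernel}. It writes $\Lambda_n^{\mu_N}(\mathbf{x})^{-1}=K_n^{\mu_N}(\mathbf{x},\mathbf{x})=\sum_{i=1}^{s_d(n)}p_i(\mathbf{x})^2$, integrates term by term, and uses $\int p_i^2\,d\mu_N=1$. This is exactly the ``equivalent route'' you mention in passing. Your main line instead starts from the matrix formula \eqref{eq:inv_CF} and uses the cyclic property of the trace to obtain $\operatorname{tr}\bigl(\mathbf{M}_n(\mu_N)^{-1}\mathbf{M}_n(\mu_N)\bigr)=s_d(n)$.

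The two are the same computation in different coordinates: picking an orthonormal basis amounts to a change of variables that turns the moment matrix into the identity. What your version buys is that it is basis-free and matches the formula actually used to compute the score. It also exposes the one hypothesis both proofs need. The paper's orthonormal family of size $s_d(n)$ exists only if $\int PQ\,d\mu_N$ is a genuine inner product on $\mathbb{R}_n[\mathbf{x}]$, that is, only if $\mathbf{M}_n(\mu_N)$ is positive definite. The paper's version is shorter and ties the lemma directly to the kernel definition.

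One small correction. $N>s_d(n)$ does not by itself guarantee invertibility. Invertibility fails whenever all points of $\mathbb{X}$ lie on the zero set of a nonzero polynomial of degree at most $n$. So state the hypothesis as invertibility of $\mathbf{M}_n(\mu_N)$ itself. Your trace argument also shows that with a pseudo-inverse the same average equals $\operatorname{rank}\mathbf{M}_n(\mu_N)$ rather than $s_d(n)$.
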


\begin{proof}
\begin{equation} \label{eq:meanCF}
    \begin{split}
        \int \Lambda^{\mu_N}_n(\mathbf{x})^{-1} d\mu_N(\mathbf{x}) & = \int K_n^{\mu_N}(\mathbf{x},\mathbf{x}) d\mu_N(\mathbf{x}) ~~(\text{from Equation } \eqref{eq:defCF})\\
     & = \sum_{i=1}^{s_d(n)} \int p_i(\mathbf{x})^2 d\mu_N(\mathbf{x}) ~~(\text{from Equation }\eqref{eq:defCDKernel})\\
     & = \sum_{i=1}^{s_d(n)} 1 ~~(\text{as }(p_i)^{s_d(n)}_{i=1} \text{ is an orthonormal basis of } \mathbb{R}_n[\mathbf{x}]) \\ 
     & = s_d(n)
    \end{split}
\end{equation}
\end{proof}

To leverage this result, we propose to use the Chebyshev inequality \cite{pukelsheim1994three} to identify the inliers low and high eCF growth ratio ranges. According to this inequality, for random distributions, at least $100(\frac{k^2-1}{k^2})\%$ of the samples are in the range given by the following interval:
\begin{equation}\label{sigma_rule}
 [\bar{x} - k \times \sigma ; \bar{x} + k \times \sigma],
\end{equation}
where $\bar{x}$ and $\sigma$ are the mean and the standard deviation of the distribution, respectively. We further assume that the interval upper bound $(\bar{x} + k \times \sigma)$ can be used as a threshold to identify high values and hence near-anomalies, which would amount to no more than $100\left(\frac{1}{k^2}\right)\%$ of the samples.

Let us apply Chebyshev inequality to our problem, which translates in the following definition.

\begin{definition}[Core inliers and near-anomaly samples]\label{def:core_near}
Let us define $\mathbb{E}\left[\mathcal{G}_{n_2/n_1}\right]$ and $\sigma\left(\mathcal{G}_{n_2/n_1}\right)$ as the mean and the standard deviation of the eCF growth ratio $\mathcal{G}_{n_2/n_1}$, core inliers and near-anomaly samples are characterized as follows:
\begin{align}
&\forall \mathbf{x} \in \mathbb{X}_c \Leftrightarrow \mathcal{G}_{n_2/n_1}(\mathbf{x}) \in \left]0 ~ ; \mathbb{E}\left[\mathcal{G}_{n_2/n_1}\right] + k \times \sigma\left(\mathcal{G}_{n_2/n_1}\right)\right], \\
&\forall \mathbf{x} \in \mathbb{X}_{na} \Leftrightarrow \mathcal{G}_{n_2/n_1}(\mathbf{x}) > \mathbb{E}\left[\mathcal{G}_{n_2/n_1}\right] + k \times \sigma\left(\mathcal{G}_{n_2/n_1}\right).
\end{align}
\end{definition}
 
From Definition \ref{def:core_near}, it comes that the threshold $\tau_{na}$ that we want to determine is:
\begin{equation}\label{eq:tau}
\tau_{na}=\mathbb{E}\left[\mathcal{G}_{n_2/n_1}\right] + k \times \sigma\left(\mathcal{G}_{n_2/n_1}\right).
\end{equation}

In consequence, our problem comes down to determining $\mathbb{E}\left[\mathcal{G}_{n_2/n_1}\right]$ and $\sigma\left(\mathcal{G}_{n_2/n_1}\right)$. The main result below provides a way to compute $\tau_{na}$ thanks to Lemma \ref{th:lemma3}, which provides the theoretical mean value of eCF in the support, and some empirical computations.
\begin{theorem} [Practical near-anomaly threshold] \label{eq:tnatilde}
    Consider $n_1, n_2 \in \mathbb{N}^{*}$, $n_1<n_2$, $k \in \mathbb{N}^{*}$, and $m, \Gamma \in \mathbb{R}^{+*}$ , then a practical near-anomaly threshold $\tilde{\tau}_{na}$ is given by:
    \begin{align}
        &\tilde{\tau}_{na} = \frac{s_d(n_2)}{m}+k\times \Gamma \\
    & \text{where } s_d(n_2)=\binom{n_2+d}{n_2}, \label{eq:xbarre}\\
    &m = \operatorname{min}\left\{\left(\Lambda^{\mu_N}_{n_1}(\mathbf{x})\right)^{-1}, \mathbf{x} \in \mathbb{X}\right\}, \label{eq:minValid}\\
    &\Gamma = \sqrt{\operatorname{var}\left\{{\mathcal{G}_{n_2/n_1}(\mathbf{x}), \mathbf{x} \in \mathbb{X}}\right\}}. \label{eq:sigmaValid}
    \end{align}
\end{theorem}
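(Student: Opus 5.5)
The theorem is really a recipe rather than a sharp identity, so the aim is to justify that each ingredient of $\tilde{\tau}_{na}$ is a computable surrogate, or a conservative bound, for the corresponding ingredient of the ideal threshold $\tau_{na}$ in \eqref{eq:tau}. I will start from \eqref{eq:tau} and treat the two terms separately. The deviation term $k\times\sigma(\mathcal{G}_{n_2/n_1})$ is handled by plugging in the empirical standard deviation $\Gamma$ of \eqref{eq:sigmaValid}. This is directly computable on $\mathbb{X}$ once the eCFs at degrees $n_1$ and $n_2$ are known, and it is consistent by the convergence of the eCF to the CF (Theorem 6.2.3 of \cite{lasserre2022christoffel}), since $\Lambda_{n_1}^{\mu_N}(\mathbf{x})^{-1}\geq m>0$ keeps the ratio well defined.

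The substantive step is the mean term. I will write
\[
\mathbb{E}\left[\mathcal{G}_{n_2/n_1}\right]=\int \frac{\Lambda^{\mu_N}_{n_2}(\mathbf{x})^{-1}}{\Lambda^{\mu_N}_{n_1}(\mathbf{x})^{-1}}\,d\mu_N(\mathbf{x}).
\]
For every $\mathbf{x}\in\mathbb{X}$ the denominator satisfies $\Lambda^{\mu_N}_{n_1}(\mathbf{x})^{-1}\geq m$, by the definition \eqref{eq:minValid} of $m$ as a minimum over the finite set $\mathbb{X}$. The integrand is nonnegative, so it is pointwise bounded by $\Lambda^{\mu_N}_{n_2}(\mathbf{x})^{-1}/m$. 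Integrating against $\mu_N$ and applying Lemma~\ref{th:lemma3} with $n=n_2$ gives
\[
\mathbb{E}\left[\mathcal{G}_{n_2/n_1}\right]\leq \frac{1}{m}\int \Lambda^{\mu_N}_{n_2}(\mathbf{x})^{-1}\,d\mu_N(\mathbf{x})=\frac{s_d(n_2)}{m}.
\]
Lemma~\ref{th:lemma3} needs $N>s_d(n_2)$ so that $\mathbf{M}_{n_2}(\mu_N)$ is invertible and the orthonormal basis exists; I will state this as a standing assumption. The conclusion is that $\frac{s_d(n_2)}{m}$ is a closed-form upper bound on the unknown mean, computable without evaluating the ratio's mean empirically.

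Combining the two pieces gives $\tau_{na}\leq \frac{s_d(n_2)}{m}+k\,\sigma(\mathcal{G}_{n_2/n_1})$, and replacing $\sigma$ by $\Gamma$ yields $\tilde{\tau}_{na}$. The Chebyshev argument behind Definition~\ref{def:core_near} then transfers: since $\tilde{\tau}_{na}$ is at least $\tau_{na}$ up to the estimation error in $\Gamma$, at most a fraction $1/k^2$ of the samples exceed it. This is why $\tilde{\tau}_{na}$ is a valid, slightly conservative practical threshold.

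The main obstacle is the word ``practical''. The argument gives an inequality rather than an equality, and it relies on the empirical $\Gamma$ rather than the true $\sigma$. I will therefore present the theorem as establishing a computable conservative surrogate, and not claim exact equality with $\tau_{na}$.
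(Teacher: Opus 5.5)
Your proposal is correct and follows the paper's proof essentially step for step: lower-bound the denominator by $m$, integrate against $\mu_N$ and apply Lemma~\ref{th:lemma3} at degree $n_2$ to get $\mathbb{E}[\mathcal{G}_{n_2/n_1}]\le s_d(n_2)/m$, then replace $\sigma(\mathcal{G}_{n_2/n_1})$ by the empirical $\Gamma$ (your derivation of $\Lambda^{\mu_N}_{n_1}(\mathbf{x})^{-1}\ge m$ directly from $m$ being a finite minimum is cleaner than the paper's appeal to the literature). The mean in \eqref{eq:tau} is taken with respect to $\mu_N$ and $\Gamma$ is computed on $\mathbb{X}$ itself, so $\Gamma$ equals $\sigma(\mathcal{G}_{n_2/n_1})$ (or exceeds it under the $1/(N-1)$ convention); your hedge about estimation error and the appeal to eCF-to-CF convergence are therefore unnecessary, and $\tau_{na}\le\tilde{\tau}_{na}$ holds exactly.
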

\begin{proof}
From \eqref{eq:tau}, we have $\tau_{na}=\mathbb{E}\left[\mathcal{G}_{n_2/n_1}\right] + k \times \sigma\left(\mathcal{G}_{n_2/n_1}\right)$ and we have:
\begin{equation*}
\mathbb{E}\left[\mathcal{G}_{n_2/n_1}\right]= \int \mathcal{G}_{n_2/n_1}(\mathbf{x})d\mu_N(\mathbf{x})= \int \frac{ \Lambda^{\mu_N}_{n_2}(\mathbf{x})^{-1}}{ \Lambda^{\mu_N}_{n_1}(\mathbf{x})^{-1}} d\mu_N(\mathbf{x})
\end{equation*}
As $\Lambda^{\mu_N}_{n_1}(\mathbf{x})^{-1}$ is lower bounded on $\mu_N$, there exists a constant $m \in \mathbb{R}^{+*}$ such that $\Lambda^{\mu_N}_{n_1}(\mathbf{x})^{-1} \geq m$ \cite{lasserre2022christoffel}. So,
\begin{equation*} \label{eq:quotienIntegral} 
    \int \frac{ \Lambda^{\mu_N}_{n_2}(\mathbf{x})^{-1}}{ \Lambda^{\mu_N}_{n_1}(\mathbf{x})^{-1}} d\mu_N(\mathbf{x}) \leq \int \frac{ \Lambda^{\mu_N}_{n_2}(\mathbf{x})^{-1}}{ m} d\mu_N(\mathbf{x})=\frac{1}{m} \int \Lambda^{\mu_N}_{n_2}(\mathbf{x})^{-1} d\mu_N(\mathbf{x})
\end{equation*}
and from Lemma \ref{th:lemma3}:
\begin{equation*}
 \int \frac{ \Lambda^{\mu_N}_{n_2}(\mathbf{x})^{-1}}{ \Lambda^{\mu_N}_{n_1}(\mathbf{x})^{-1}} d\mu_N(\mathbf{x}) \leq \frac{s_d(n_2)}{m} \Leftrightarrow \tau_{na} \leq \frac{s_d(n_2)}{m} + k \times \sigma(\mathcal{G}_{n_2/n_1}). 
\end{equation*}
Because there is no theoretical result providing $m$ and $\sigma(\mathcal{G}_{n_2/n_1})$, we propose to compute them empirically from the data set $\mathbb{X}$ at hand\footnote{To avoid division by extremely small values, a $99\%$ percentile can be estimated from the validation set.}. 
Finally, 
\begin{equation*}
    \tau_{na} \leq \frac{s_d(n_2)}{\operatorname{min}\left\{\left(\Lambda^{\mu_N}_{n_1}(\mathbf{x})\right)^{-1}, \mathbf{x} \in \mathbb{X}\right\}} + k \times \sqrt{\operatorname{var}\left\{{\mathcal{G}_{n_2/n_1}(\mathbf{x}), \mathbf{x} \in \mathbb{X}}\right\}}= \tilde{\tau}_{na}.
\end{equation*}
\end{proof}

$\tilde{\tau}_{na}$ is more restrictive than $\tau_{na}$ but, as will be seen in the experiments, it provides a good practical threshold to identify near-anomalies. Algorithm~\ref{alg:threshold} proposes the pseudo-code to compute the $\tilde{\tau}_{na}$ threshold.

\begin{algorithm}[t]

\caption{Computation of $\tilde{\tau}_{na}$} \label{alg:threshold}
\begin{algorithmic} 
\State \textbf{Input:} $n_2 > n_1 > 0$, 
\State \textbf{Input:} $k>0$,
\State \textbf{Input:} Trained $(\Lambda^{\mu_N}_{n_1})^{-1}$ and $(\Lambda^{\mu_N}_{n_2})^{-1}$,
\State \textbf{Input:} Validation dataset $X_v=\{x_i\}_{i \in (1, |X_v|)}$
\State \textbf{Output:} The threshold $\tilde{\tau}_{na}$
\State $d \gets$ dimension of $x_1$
\State $\mathcal{G}_{n_2/n_1} \gets \left\{\frac{\Lambda^{\mu_N}_{n_2}(\mathbf{x})^{-1}}{\Lambda^{\mu_N}_{n_1}(\mathbf{x})^{-1}}, x \in X_v\right\}$ \Comment{Equation~\ref{eq:ratio2}}
\State $\bar{x} \gets s_d(n_2)=\binom{n_2+d}{n_2}$ \Comment{Equation~\ref{eq:xbarre}}
\State $min \gets min\left\{\Lambda^{\mu_N}_{n_1}(\mathbf{x})^{-1}, x \in X_v\right\}$ \Comment{Equation~\ref{eq:minValid}}
\State $\sigma \gets \sqrt{\operatorname{Var}(\mathcal{G}_{n_2/n_1})}$ \Comment{Equation~\ref{eq:sigmaValid}}
\State $\tilde{\tau}_{na} \gets \frac{\bar{x}}{min_{n_1}} + k \times \sigma$ \Comment{Theorem~\ref{eq:tnatilde}}
\State \Return $\tilde{\tau}_{na}$
\end{algorithmic}
\end{algorithm}

\subsection{Scaling up the Christoffel function}

\begin{figure}[t]
    \centering
    \includegraphics[width=1\linewidth]{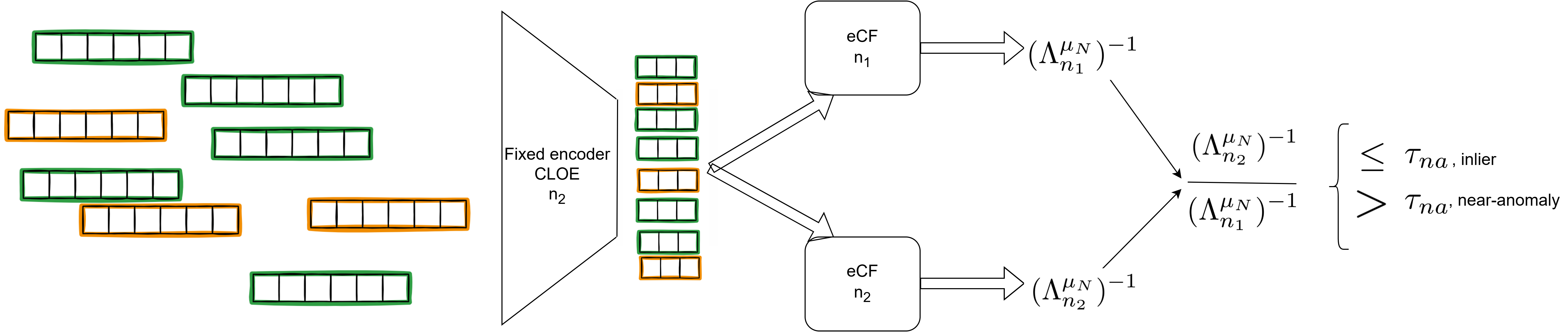}
    \caption{With normal data as input, the trained encoder from CLOE delivers low dimension encoded data in the latent space. Both near-anomaly samples, in orange, and core inlier samples, in green, are present. Then two eCF are trained, one with parameter $n_1$ and the other with $n_2, n_1<n_2$. Finally, the ratio of the two eCF scores is compared to the near-anomaly threshold, $\tau_{na}$ (in practice $\tilde{\tau}_{na})$, to decide if the data is core inlier or near-anomaly.}
    \label{fig:fullMethod}
\end{figure}

Even for moderate degrees $n$, the inverse of the Christoffel function is very expensive to compute if not impossible for feature dimensions $d>8$ because of the necessity to invert the moment matrix $\mathbf{M}_n(\mu_N)$ of size $s_d(n) \times s_d(n)$ in equation \eqref{eq:inv_CF}. 

To use the proposed method in high-dimensional settings, we propose to use the Christoffel-based Loss for Autoencoder (CLOE)~\cite{billet2026cloe}. CLOE consists of a method that applies the eCF to high-dimensional data for semi-supervised anomaly detection. It allows to reduce the dimension of high-dimensional datasets in a latent space specifically regularized with the CF score in the loss function of the autoencoder (AE).

CLOE includes a pretraining stage and a joint training stage. The pretraining stage is performed using only the Mean Square Error (MSE) between the input and the output of the AE as the loss function. Then, the joint training stage combines the MSE with the CF-based loss to structure the latent space. For this latter step, the only hyperparameter required is the degree $n$ for the CF computation, which is set to $n=n_{CLOE}$. The trained encoder reduces the data dimension to a lower dimension in the latent space (8 dimensions are proposed in \cite{billet2026cloe}). The latent space data is then used to train the eCF. The threshold for outlier detection is chosen as the maximum value of the eCF. 

In this work, CLOE is used as follows. CLOE is trained with normal data, the parameter $n_{CLOE}$ being set to $n_2$, which provides a low dimension latent space characterized by the set of moments up to the order $n_2$. After this dimension reduction phase, the obtained low dimension latent space data is used to compute the eCF for the two degrees $n=n_1$ and $n=n_2$, $n_1<n_2$, i.e., to compute $\Lambda^{\mu_N}_{n_1}(\mathbf{x})^{-1}$ and $\Lambda^{\mu_N}_{n_2}(\mathbf{x})^{-1}$, and the CANARI method can be applied to identify near-anomalies. This is illustrated in Figure~\ref{fig:fullMethod}.

\section{Experiments and results} \label{sec:experiments}

\subsection{Datasets} \label{sec:dataset}
CANARI is applied to real production data acquired in industrial plants. In these plants, PCBs are produced in an almost fully automated way. First, the PCBs are assembled in the front-end line. Components are placed and welded during this step. At the end of this line, ICT is performed. This test consists of injecting current into the PCB at many points on the circuit (between 10 and 2000, depending on the size of the PCB) and measuring the voltage and the current intensity to check the conformance of the PCB. Test engineers define limits for each test, and the PCB continues the manufacturing process if all its ICT results are within the limits. Otherwise, the PCB is removed from the production line. The second line is the back-end line where the PCB is encapsulated in an appropriate box. Finally, the PCB can be delivered to the customer.

The ICT results constitute tabular data with dimensions ranging from 10 to 1200, depending on the number of test points for each PCB. 8 different PCBs produced in 2023 were considered for this study, resulting in eight datasets D1 to D8. The ICT results may be PASS or FAIL. Only products that pass the ICT constitute the datasets. The dimensions of these products vary between 18 and 1139. The details of the dimension and the size for each dataset are indicated in Table~\ref{tab:dimDataset}. Each dataset is divided into 3 different sets: the training set, composed of 1150 samples, the validation set, composed of 4600 samples, and the testing set composed of the rest of the samples. 

\begin{table}[t] 
    \centering 
    \begin{tabular}{c|c|c|c|c|c|c|c|c}
         Dataset& D1 & D2 & D3 & D4 & D5 & D6 & D7 & D8 \\
        \midrule
         Dimension &26&821&80&691&703&714&1139&18\\
         Nb samples &57183&56334&59947&25827&10091&8421&26729&41151\\
    \end{tabular}
    \caption{Dimension and size of each dataset} \label{tab:dimDataset} 
\end{table}

\begin{figure}[t]
    \centering
    \includegraphics[width=1\linewidth]{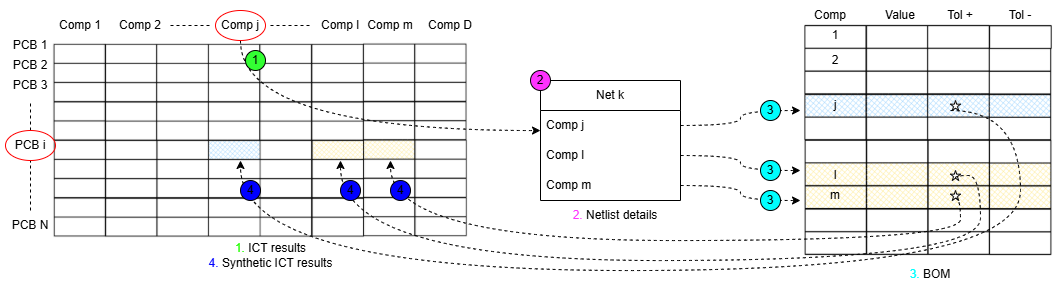}
    \caption{1. A PCB product (PCB i) and one of its component (Comp j) are randomly chosen (red circle). 2. Netlist is checked to find the net (k) to which comp j belongs. All the components of the net k (j, l and m) are modified. 3. The value of these components and their upper and lower tolerances Tol+ and Tol-, respectively, are obtained from the Bill Of Materials (BOM), from which the upper and lower limits, $l^+$ and $l^-$ respectively, are computed. 4. $l^+$ or $l^-$ is randomly chosen. The corresponding ICT results of the concerned components are modified to appear below or above their limit.}
    \label{fig:syntDataset}
\end{figure}

To evaluate the method, synthetic anomalies were injected into the datasets, as illustrated in Figure~\ref{fig:syntDataset}. Starting from real datasets and known ICT threshold values and upper and lower tolerances Tol+ and Tol-, respectively, from the Bill Of Materials (BOM), $10\%$ of the samples in each test dataset were modified. First, a component is randomly chosen. Then the upper limit $l^+$ or lower limit $l^-$ is randomly chosen. The ICT result for this component is modified to be close to the limit $l$. The order of magnitude $o$ of the ICT result is computed with :
\begin{equation} \label{eq:orderMagnitude}
    o = 10^{\lfloor log(l^+ - l^-) \rfloor}.
\end{equation} The final value of the modified component is $l \pm o \times \tau_{synth}$, where $\tau_{synth}$ is the disturbed
magnitude of the near anomaly samples. Finally, knowing the design of the PCB, all the components connected by the first net in the netlist, where the component belongs, are modified according to the same upper or lower decision. With this process, synthetic datasets are created taking into account the connection of the different components on the PCB and the ICT.

\subsection{Baselines and Metrics}

CLOE with double threshold is considered as the first baseline. The original threshold is considered to compute the new threshold to detect near-anomalies. 5\%, 20\% and 50\% of the original threshold are used as the double threshold to detect near-anomaly samples.

Then, CANARI with different thresholds is used as the second baseline. The new thresholds are the ratio of the theoretical means of the two eCFs, $\tau_{mean}$, and the empirical percentile $99^{th}$ of the ratio, $\tau_{upper}$.

F1-score and Matthews Correlation Coefficient (MCC) \cite{matthews1975comparison} are the two metrics used to evaluate CANARI. F1-score, given in Definition~\ref{eq:f1score}, is computed with the recall and precision metrics. Its lower value is 0 and its best value is 1. 

\begin{definition}[F1-score] \label{eq:f1score}
    Let TP be the true positives, TN the true negatives, FP the false positives, and FN the false negatives. The metric F1-score is defined as:
    \begin{equation} 
        F_{1-score}=\frac{2TP}{2TP+FP+FN}=2\frac{precision\cdot recall}{precision+recall}.
    \end{equation}
\end{definition}

MCC, given in Definition~\ref{eq:MCC}, computes the Pearson product moment correlation coefficient in a contingency matrix method. Its worst value is -1, and its best value is 1, 0 corresponds to a random classification. This metric is stronger against unbalanced datasets than F1-score and accuracy \cite{chicco2020advantages}. 

\begin{definition}[MCC] \label{eq:MCC}
    Let TP be the true positives, TN the true negatives, FP the false positives, and FN the false negatives. The metric MCC is defined as:
    \begin{equation}
        MCC=\frac{TP\cdot TN - FP \cdot FN}{\sqrt{(TP+FP)\cdot(TP+FN)\cdot(TN+FP)\cdot(TN+FN)}}
    \end{equation}
\end{definition}

The testing set is composed of two classes: core inliers and synthetic near-anomalies. Note that the core inliers testing dataset may also be corrupted by real unknown near-anomalies because it is based on real world data, so comparing the metrics for baselines and CANARI only makes sense on our datasets.

\subsection{Results}
\subsubsection{Observation of the growth of the eCF with n}
The goal of this subsection is to visually assess the eCF growth patterns for fail, pass, and near-anomaly samples. Pretraining and joint-training steps of CLOE are performed for $n_{CLOE}=4$. Then the eCF is computed for $n$ varying from $1$ to $5$. Figure~\ref{fig:growthDetected} (a) illustrates this experiment with PCBs that fail the ICT represented by the red graphs (only 2 red graphs because only 2 such PCBs) and PCBs that pass the ICT represented by green graphs. The testing set includes fail products in this experiment. The fail samples (in red in the image) exhibit the greatest increase, even for low values of $n$. Then, between $n=3$ and $n=4$, the graphs of some pass samples show a change in growth pattern, these products are the near-anomaly products we are looking for. The other pass samples have a polynomial growth from $n=1$ to $n=5$.

With this experiment, the change in the growth pattern with the parameter $n$ is visible. Part of the distribution is clearly different from the rest of the distribution.

\subsubsection{Evaluation of the proposed method}
In this section, and for computational reasons, $n_1$ is fixed to 1 and $n_2$ is fixed to 4. Different values for the parameter $k$ are tested, namely $k \in \{1, \sqrt{2}, 1.5, 2, 2\sqrt{2}, 3, 4, 5, 10\}$. According to the Chebyshev inequality~\cite{pukelsheim1994three}, this values correspond to an upper bound on the proportion of samples falling below the defined threshold $\tau_{na}$, as summarized in Table~\ref{tab:chebyshev}. A slightly smaller proportion falls below $\tilde{\tau}_{na}$. The value of $\tau_{synth}$ is fixed to $10^{-2}$ to create the disturbed magnitude of the near anomaly samples. 

\begin{table}[t]
    \centering
    \begin{tabular}{c|ccccccccc}
        $k$ &1&$\sqrt{2}$&$1.5$&$2$&$2\sqrt{2}$&$3$&$4$&$5$&$10$\\
        \hline
        Maximum percentage &$100\%$&$50\%$&$44.5\%$&$25\%$&$12.5\%$&$11.11\%$&$6.25\%$&$4\%$&$1\%$
    \end{tabular}
    \caption{Correspondence between the parameter $k$ and the proportion of samples outside the upper bound according to the Chebyshev inequality~\cite{pukelsheim1994three}.}
    \label{tab:chebyshev}
\end{table}

Figure~\ref{fig:growthDetected} (b) shows the growth of the eCF with $n$ varying from 1 to 5. The blue graphs correspond to the near-anomaly samples detected by CANARI while the green ones correspond to the samples detected as core inliers. We can observe that the detected near-anomaly samples correspond to samples whose growth lies between that of fail PCB samples and core inlier PCB samples. This observation confirms that CANARI detects samples with a growth between fail samples and core inlier samples.

\begin{figure}
    \begin{minipage}[c]{0.8\textwidth}
        \centering 
        \includegraphics[width=\linewidth]{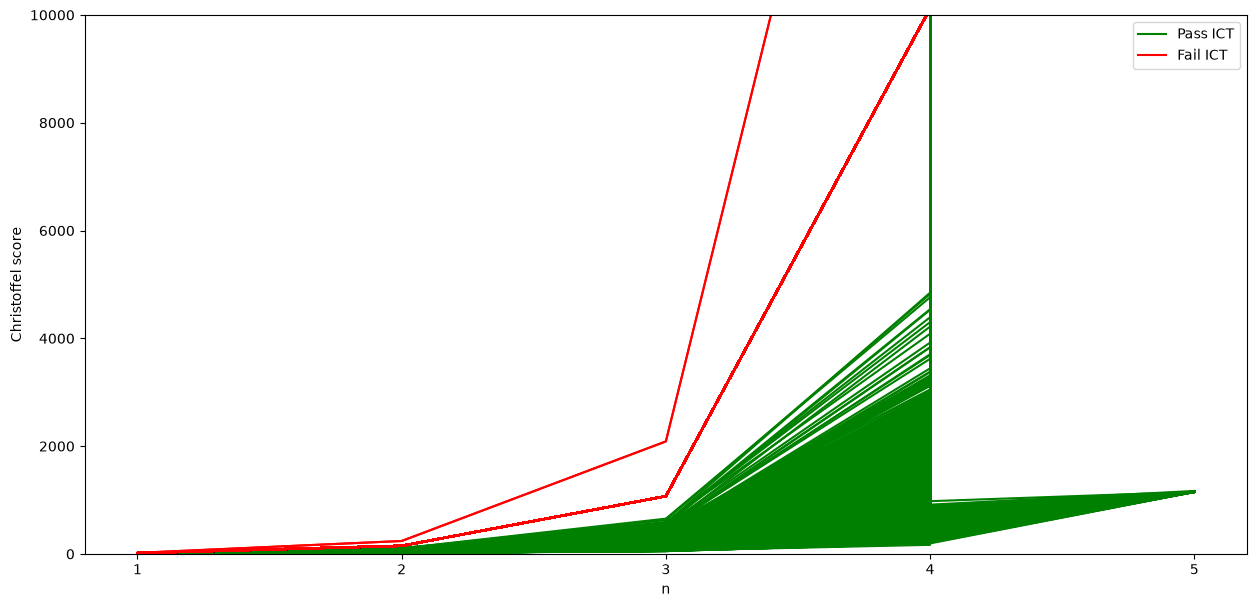} 
        \caption*{(a) Pass ICT in green, and Fail ICT in red.}
        \vspace{0.3cm}
        \includegraphics[width=\linewidth]{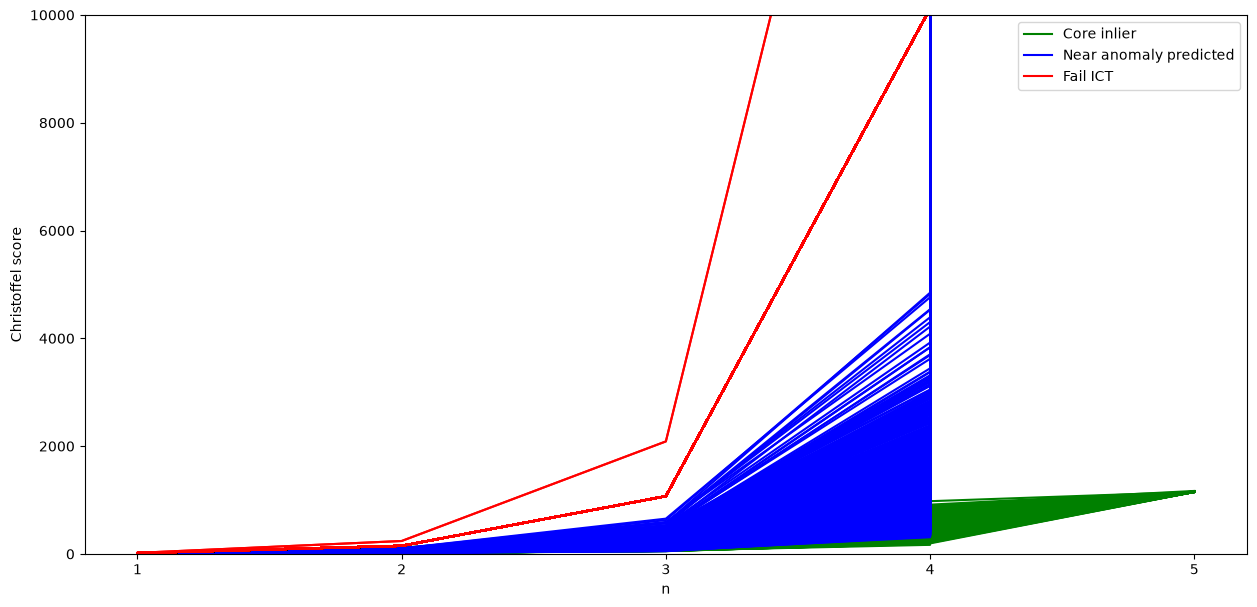}
        \caption*{(b) Predicted core inliers (green) and near anomalies (blue) and real fail samples (red).} 
    \end{minipage}
    \caption{The figure highlights the evolution of the eCF as $n$ varies from 1 to 5. (a) displays the true label, PASS or FAIL, with the samples that succeeded ICT in green and the samples that failed ICT in red. In this dataset, only two products have failed the ICT, hence only two red graphs. (b) displays the predicted label, with core inliers in green and near anomalies in blue. In red, we find the two Fail ICT samples kept to compare their graph growth. For $n=5$, core inliers have slightly different scores around 1500, hence the "strange" value concentration due to the scale. Between $n=4$ and $n=5$, the near anomaly sample scores increase so sharply that their values are out of the figure for $n=5$, producing a "strange" blue line that appears to be vertical at $n=4$ in the figure.
    Red graphs, i.e., those of Fail ICT samples, exhibit a faster growth than other graphs. The difference between the growth of the green and blue graphs, i.e., core inliers and near anomalies respectively, really occurs in the interval between $n=3$ to $n=4$. Over the interval from $n=1$ to $n=5$, we can hence distinguish a polynomial growth for the green graphs, i.e., those of core inliers, compared to an exponential growth for the red graphs, i.e., those of Fail ICT samples, the near anomaly graphs growth standing in between.}
    \label{fig:growthDetected}
\end{figure}

Then synthetic datasets were created for PCBs as described in Section~\ref{sec:dataset}. 
Five runs are performed with different random seeds. The random seed is the same for each dataset and each method to ensure reproducibility across the same run. The mean and the standard deviation across these five runs are displayed in Table~\ref{tab:experiencesf1score} for the F1-score metric and in Table~\ref{tab:experiencesMCC} for the MCC metric.

The performance obtained by CANARI on the datasets D1 to D8 is better than double thresholded CLOE with different percentage thresholds for both metrics. For datasets D1 to D8, CANARI improves the F1-score by 65.8\% and the MCC by 34.3\% comparing to the CLOE methods with percentage. Classic CANARI obtains better performances than CANARI-$\tau_{mean}$ and CANARI-$\tau_{upper}$. This shows that the proposed threshold $\tilde\tau_{na}$ can actually detect near anomaly samples.

\begin{table}[t]
    \centering \setlength{\tabcolsep}{2pt} 
    \resizebox{\textwidth}{!}{%
    \begin{tabular}{c|c|c|c|c|c|c|c|c||c} 
        Method & D1 & D2 & D3 & D4 & D5 & D6 & D7 & D8 & Mean \\
        \midrule
         CANARI &\textbf{0.76\tiny{$\pm$0.1}}&\textbf{0.49\tiny{$\pm$0.06}}&\textbf{0.64\tiny{$\pm$0.15}}&\textbf{0.45\tiny{$\pm$0.04}}&\textbf{0.41\tiny{$\pm$0.06}}&\textbf{0.50\tiny{$\pm$0.06}}&\textbf{0.50\tiny{$\pm$0.25}}&\textbf{0.72\tiny{$\pm$0.08}}&\textbf{0.56}\\
         CANARI-${\tau_{mean}}$&0.16\tiny{$\pm$2.10$^{-6}$}&0.21\tiny{$\pm$0.002}&0.16\tiny{$\pm$10$^{-6}$}&0.31\tiny{$\pm$0.01}&0.20\tiny{$\pm$0.004}&0.22\tiny{$\pm$0.003}&0.03\tiny{$\pm$0.003}&0.07\tiny{$\pm$4.10$^{-6}$}&0.17\\
         CANARI-${\tau_{upper}}$ &0.25\tiny{$\pm$0.0004}&0.21\tiny{$\pm$0.0009}&0.22\tiny{$\pm$0.001}&0.14\tiny{$\pm$0.01}&0.07\tiny{$\pm$0.006}&0.17\tiny{$\pm$0.02}&0.49\tiny{$\pm$0.06}&0.45\tiny{$\pm$0.001}&0.25\\
         CLOE$_{5\%}$ &0.16\tiny{$\pm$0.02}&0.06\tiny{$\pm$0.006}&0.04\tiny{$\pm$0.0004}&0.17\tiny{$\pm$0.01}&0.15\tiny{$\pm$0.01}&0.17\tiny{$\pm$0.01}&0.34\tiny{$\pm$0.03}&0.34\tiny{$\pm$0.04}&0.18\\
         CLOE$_{20\%}$ &0.26\tiny{$\pm$0.02}&0.09\tiny{$\pm$0.002}&0.08\tiny{$\pm$0.002}&0.28\tiny{$\pm$0.02}&0.16\tiny{$\pm$0.01}&0.24\tiny{$\pm$0.01}&0.37\tiny{$\pm$0.04}&0.39\tiny{$\pm$0.03}&0.23\\
         CLOE$_{50\%}$ &0.45\tiny{$\pm$0.01}&0.19\tiny{$\pm$0.005}&0.27\tiny{$\pm$0.02}&0.39\tiny{$\pm$0.004}&0.25\tiny{$\pm$0.003}&0.31\tiny{$\pm$0.02}&0.41\tiny{$\pm$0.05}&0.45\tiny{$\pm$0.03}&0.34\\
    \end{tabular}}
    \caption{F1-score for the 8 datasets D1 to D8 for our method and the baselines method} \label{tab:experiencesf1score}
\end{table}

\begin{table}[t] 
    \centering \setlength{\tabcolsep}{2pt}
    \resizebox{\textwidth}{!}{%
    \begin{tabular}{c|c|c|c|c|c|c|c|c||c}
        Method & D1 & D2 & D3 & D4 & D5 & D6 & D7 & D8 & Mean \\
        \midrule
    CANARI&\textbf{0.76\tiny{$\pm$0.09}}&\textbf{0.48\tiny{$\pm$0.07}}&\textbf{0.61\tiny{$\pm$0.17}}&\textbf{0.47\tiny{$\pm$0.04}}&\textbf{0.39\tiny{$\pm$0.08}}&\textbf{0.49\tiny{$\pm$0.04}}&\textbf{0.50\tiny{$\pm$0.25}}&\textbf{0.73\tiny{$\pm$0.08}}&\textbf{0.55}\\
        CANARI-${\tau_{mean}}$&0.09\tiny{$\pm$10$^{-5}$}&0.15\tiny{$\pm$0.003}&0.09\tiny{$\pm$9.10$^{-5}$}&0.28\tiny{$\pm$0.01}&0.17\tiny{$\pm$0.006}&0.19\tiny{$\pm$0.005}&0.06\tiny{$\pm$0.001}&0.07\tiny{$\pm$9.10$^{-6}$}&0.14\\
         CANARI-${\tau_{upper}}$ &0.37\tiny{$\pm$0.003}&0.32\tiny{$\pm$0.001}&0.32\tiny{$\pm$0.002}&0.21\tiny{$\pm$0.02}&0.11\tiny{$\pm$0.01}&0.23\tiny{$\pm$0.03}&0.48\tiny{$\pm$0.05}&0.53\tiny{$\pm$0.0006}&0.32\\
         CLOE$_{5\%}$ &0.27\tiny{$\pm$0.01}&0.16\tiny{$\pm$0.001}&0.14\tiny{$\pm$0.001}&0.27\tiny{$\pm$0.01}&0.22\tiny{$\pm$0.02}&0.28\tiny{$\pm$0.01}&0.40\tiny{$\pm$0.04}&0.43\tiny{$\pm$0.03}&0.27\\
         CLOE$_{20\%}$ &0.36\tiny{$\pm$0.02}&0.20\tiny{$\pm$0.003}&0.19\tiny{$\pm$0.004}&0.37\tiny{$\pm$0.01}&0.24\tiny{$\pm$0.02}&0.34\tiny{$\pm$0.01}&0.10\tiny{$\pm$0.04}&0.47\tiny{$\pm$0.02}&0.28\\
         CLOE$_{50\%}$ &0.52\tiny{$\pm$0.008}&0.30\tiny{$\pm$0.003}&0.37\tiny{$\pm$0.02}&0.45\tiny{$\pm$0.004}&0.32\tiny{$\pm$0.005}&0.39\tiny{$\pm$0.01}&0.43\tiny{$\pm$0.05}&0.52\tiny{$\pm$0.02}&0.41\\
    \end{tabular}}
    \caption{MCC for the 8 datasets D1 to D8 for our method and the baselines method} \label{tab:experiencesMCC} 
\end{table}

\subsection{Ablation and sensitivity studies}
An ablation study is performed to evaluate the robustness of CANARI. This section relies exclusively on MCC as the evaluation metric. First, Table~\ref{tab:ablation_k} reports a sensitivity analysis of parameter $k$ across datasets. The optimal values for $k$ across the different datasets are $\sqrt{2}$, $1.5$ and $2$. Accordingly, the parameter $k$ should be selected among these three values.

Second, Figure~\ref{fig:ablation_n} compares different pairs of parameters $n_1$ and $n_2$, under the constraint $n_1<n_2$. This heatmap shows the mean MCC associated with each possible combination of $n_1$ and $n_2$ across all datasets. The best combination is the one used in CANARI, namely $n_1$ = 1 and $n_2$ = 4.

Then, Table~\ref{tab:ablation_cristal} compares CLOE with the kernelized inverse Christoffel function (KIC)~\cite{askari2018kernel} and the univariate Christoffel function~\cite{grivet2026scalable} as approaches for eCF computation in high dimensional datasets before applying CANARI\footnote{The implementation used for these two methods is proposed in the framework ChRISToffel Anomaly Locator available at \url{https://github.com/fgrivet/CRISTAL}}. To compute the threshold $\tilde\tau_{na}$, the best value of $k$ is selected separately for each method and dataset. For KIC~\cite{askari2018kernel} and UCF~\cite{grivet2026scalable} and for some datasets, the minimum value or the $99\%$ percentile of the eCF $\Lambda^{\mu_N}_{n_1}(\mathbf{x})^{-1}$ can be very low, in the order of $10^{-68}$, and the ratio between the two Christoffel functions cannot be computed. These datasets are therefore not included in the present analysis. CLOE consistently achieves the best results across all datasets. Only the KIC method yields similar performance on the last dataset.

Finally, Table~\ref{tab:ablation_o} presents a comparison of several values of $\tau_{synth}$, corresponding to the disturbance magnitude used for generating the synthetic near-anomaly dataset. Except for one dataset with $\tau_{synth}=0.1$, CANARI is able to detect synthetic near-anomaly samples generated with amplitudes between $0.1$ and $10^{-4}$, which illustrates the robustness of the method that does not need to define a strict near-anomaly threshold for the synthetic near anomalies.

\begin{table}[t]
    \centering
    \begin{tabular}{c|c|c|c|c|c|c|c|c}
        k & D1 & D2 & D3 & D4 & D5 & D6 & D7 & D8 \\
        \midrule
         $1$&0.743&0.342&0.539&0.418&0.293&0.442&0.318&0.677\\
        $\sqrt{2}$ &0.750&0.446&\textbf{0.610}&0.445&0.351&\textbf{0.494}&0.415&\textbf{0.733}\\
        $1.5$ &\textbf{0.759}&0.435&0.609&0.441&0.344&0.487&0.402&0.725\\
         $2$ &0.712&\textbf{0.478}&0.606&\textbf{0.469}&\textbf{0.387}&0.493&0.472&0.687\\
         $2\sqrt{2}$ &0.640&0.461&0.565&0.462&0.326&0.450&0.496&0.602\\
         $3$ &0.632&0.451&0.551&0.443&0.325&0.448&\textbf{0.501}&0.601\\
         $4$ &0.536&0.398&0.458&0.428&0.257&0.437&0.496&0.575\\
         $5$ &0.479&0.343&0.386&0.386&0.257&0.425&0.476&0.552\\
         $10$ &0.053&0.138&0.088&0.051&0.022&0.108&0.434&0.151\\
    \end{tabular}
    \caption{MCC for the 8 datasets D1 to D8 for different values of the parameter $k$} \label{tab:ablation_k}
\end{table}

\begin{figure}[t] 
    \centering
    \includegraphics[width=0.6\linewidth]{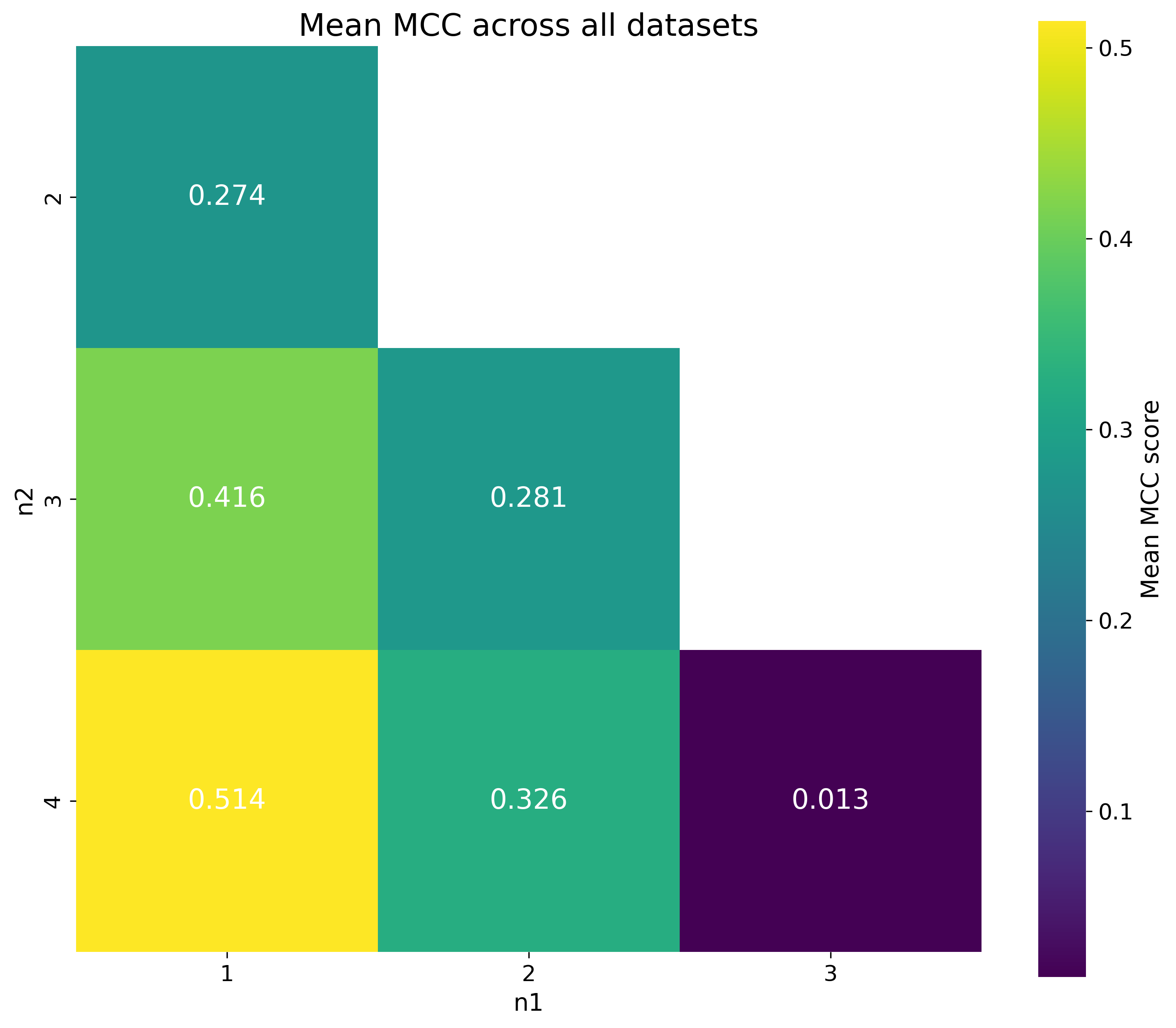}
    \caption{Heatmap of the mean of the MCC on all the datasets for different values of $n_1$ and $n_2$. The best combination is obtained for $n_1$=1 and $n_2$=4, which are the values that are used in CANARI.}
    \label{fig:ablation_n}
\end{figure}

\begin{table}[t]
    \centering
    \begin{tabular}{c|c|c|c|}
        Method to reduce dimension & D1 & D3 &D8 \\
        \midrule
        $CANARI_{CLOE}$&\textbf{0.76\tiny{$\pm$0.09}}&\textbf{0.61\tiny{$\pm$0.17}}&\textbf{0.73\tiny{$\pm$0.08}}\\
        $CANARI_{KIC}$&0.56\tiny{$\pm$0.01}&0.31\tiny{$\pm$0.09}&0.69\tiny{$\pm$0.007}\\
         $CANARI_{UCF}$ &0.31\tiny{$\pm$0.08}&0.41\tiny{$\pm$0.11}&0.29\tiny{$\pm$0.26}\\
    \end{tabular}
    \caption{MCC for the 3 datasets D1, D3, and D8 for different methods to reduce dimension before applying CANARI} \label{tab:ablation_cristal}
\end{table}

\begin{table}[t]
    \centering
    \begin{tabular}{c|c|c|c|c|c|c|c|c}
        $\tau_{synth}$ & D1 & D2 & D3 & D4 & D5 & D6 & D7 & D8 \\
        \midrule
         $0.1$&0.61 &0.33 &0.69 &0.34 &0.40 &0.45 &0 &0.57\\
         $10^{-2}$ &\textbf{0.76} &\textbf{0.48} &0.61 &\textbf{0.47} &0.39 &\textbf{0.49} &0.50&\textbf{0.73} \\
         $10^{-3}$ &0.62 &0.34 &\textbf{0.73}&0.34 &0.48 &0.47 &0.59 &0.58 \\
         $10^{-4}$ &0.62 & 0.33&\textbf{0.73} &0.36 &\textbf{0.49} &0.47 & \textbf{0.61}&0.55 \\
    \end{tabular}
    \caption{MCC for the 8 datasets D1 to D8 for different values of $\tau_{synth}$} \label{tab:ablation_o}
\end{table}

\section{Conclusion and Future Work} \label{sec:conclusion}
This paper presents CANARI and introduces the near-anomaly concept. Near-anomalies are samples that, while not yet anomalous, lie close to the boundary of the distribution and are likely
to transition into anomalies in the near future. CANARI is an unsupervised near-anomaly detection method based on the eCF that leverages theoretical properties of the CF distinguishing its growth with the parameter $n$ for data in the support and data outside the support. CANARI is constructed with two eCFs with two different values for the parameter $n$ from which the growth can be assessed. A threshold based on the Chebyshev inequality and on theoretical properties of the eCF is proposed to detect near-anomaly samples. In high-dimensional settings, CANARI can be coupled with CLOE \cite{billet2026cloe} to reduce the data dimension.
 
To evaluate this method, industrial datasets composed of ICT from a PCB manufacturing process are considered. Synthetic datasets are created to simulate near-anomaly samples based on real industrial scenarios. Experiments confirm that CANARI detects near-anomaly samples more efficiently than a classical anomaly detector with double thresholds. Moreover, CANARI with the presented threshold achieves better performance than CANARI with other thresholds.

A sensitive analysis and an ablation study were conducted to assess the robustness of CANARI. The optimal values of parameter $k$ are identifies through this study. The selected combination of $n_1$ and $n_2$ corresponds to the one that yields the best average performance across all datasets. The variation in the amplitude used to generate the synthetic near-anomalies dataset confirms the stability of CANARI. Finally, CLOE is compared with two alternative methods for computing the eCF in high-dimensional dataset. The performance obtained by the two other approaches support the choice of CLOE made in this paper. Moreover, the two alternative methods fail to compute the ratio on five datasets because of very small eCF values.

The CANARI method can be adapted to any quality process with many tests with defined limits. These quality processes are commonly used in industry, where thresholds are defined on the basis of physical models. Such physical models fail to consider the production distribution and consequently reduce the ability to detect other problems. It can be used to anticipate future failures in products that initially look normal but can have some invisible defects and cause issues at a later step.

\bibliography{ref}

\end{document}